\documentclass[runningheads]{llncs}

\usepackage{eccv}

\usepackage{eccvabbrv}

\usepackage{graphicx}
\usepackage{booktabs}
\usepackage{hyperref}
\usepackage{multirow}

\usepackage[accsupp]{axessibility}  

\usepackage{orcidlink}

\begin{document}

\title{Two-Parameter Flow Map Learning for Continuous-Time Diffeomorphic Image Registration} 

\titlerunning{TPFM Diffeomorphic Image Registration}

\author{Mohammadjavad Matinkia\orcidlink{0009-0007-2326-5852} \and
Nilanjan Ray\orcidlink{0000-0002-7588-5400}}

\authorrunning{M.~Matinkia, N.~Ray}

\institute{University of Alberta, Canada\\ \email{matinkia@ualberta.ca}, \email{nray1@ualberta.ca}}

\maketitle

\begin{abstract}
  Diffeomorphic image registration is central to medical image analysis, enabling anatomically consistent alignment across subjects. Most learning-based diffeomorphic methods model autonomous ODEs (ordinary differential equations) by parameterizing a stationary velocity field and recovering deformations via scaling-and-squaring. While non-auto\-nomous ODEs with time-dependent velocities increase expressiveness, existing approaches rely on numerical integration to implicitly enforce flow structure that entangles model expressiveness with discretization accuracy. We propose a framework to directly learn the continuous-time solution of a non-autonomous ODE formulated as a two-parameter flow map. By enforcing cocycle consistency, a fundamental structural property of time-varying flows, we learn the flow maps without time discretization and velocity integration during training. The framework recovers diffeomorphic mappings at inference using a small number of compositions. Our proposed framework seamlessly incorporates standard registration backbones and improves alignment accuracy consistently across nine datasets while preserving diffeomorphic structure. Notably, the proposed method achieves an average Dice improvement of 2.1\% on brain MRI benchmarks, a 12\% TRE reduction on lung CT, and a 2.6\% Dice gain on cardiac MRI and ultrasound datasets (\url{https://mattkia.github.io/TPFMDIR/}).
  \keywords{Diffeomorphic Image Registration \and Medical Image Registration \and Flow-Based Models \and Non-autonomous ODEs}
\end{abstract}

\begin{figure}
    \centering
    \includegraphics[width=0.9\linewidth]{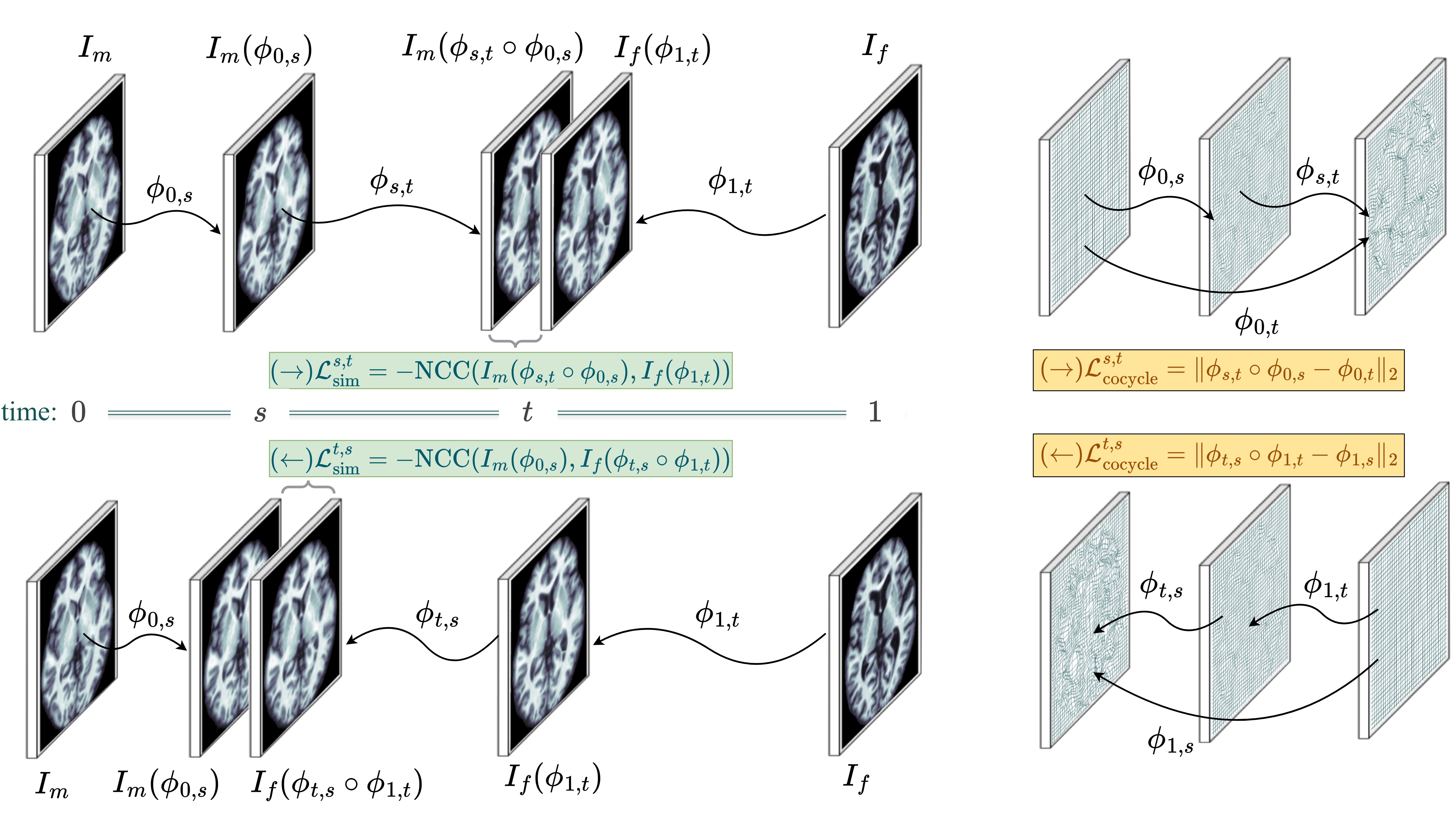}
    \caption{Overview of TPFM-DIR. The left panel illustrates time-dependent similarity supervision, while the right panel depicts cocycle consistency regularization. Top: forward-time training; bottom: backward-time training.}
    \label{fig:pitch}
\end{figure}

\section{Introduction}
\label{sec:intro}
Diffeomorphic image registration (DIR) aims to estimate a smooth, bijective transformation that warps a moving image onto a fixed image while preserving topology. Unlike general deformable registration methods, which prioritize image similarity under smoothness constraints, diffeomorphic approaches explicitly enforce smooth invertibility to prevent foldings and ensure anatomically plausible alignment. Although this constraint provides strong geometric guarantees, diffeomorphic methods have historically faced a trade-off between anatomical consistency and alignment accuracy \cite{tmsc}.

Traditional DIR methods rely on variational optimization to compute a deformation for each image pair, resulting in high computational cost \cite{syn,niftyreg,lddmm}. Neural approaches address this limitation by either optimizing network parameters per image pair or learning deformation priors across datasets for rapid inference\cite{learning-based1,learning-based2,learnin-based5}. A particularly influential class of neural DIR methods models deformations as flows generated by ordinary differential equations (ODEs)\cite{symnet,voxelmorph,transmorph}. In this formulation, the transformation is obtained by integrating a velocity field over time. Autonomous ODEs, being based on stationary velocity fields, allow efficient integration via scaling-and-squaring \cite{ss}. Non-autonomous formulations generalize this setting by allowing time-dependent velocity fields. However, they require costly numerical integration during both training and inference to induce flow structure. As a result, deformation consistency becomes coupled to discretization schemes and solver accuracy \cite{nodeo}.

We introduce \textbf{TPFM-DIR}, a flow-based framework, which directly models the continuous-time solution operator of a non-autonomous ODE. By doing so, the method eliminates explicit velocity parameterization and numerical integration during training. Our approach is grounded in the theoretical properties of non-autonomous ODE solutions. In particular, we exploit the \textbf{cocycle property}, which characterizes valid flow maps of non-autonomous systems \cite{nonauto}. We propose a cocycle-based regularization for flow maps. This regularization enforces temporal consistency, and promotes invertibility and regularity of the deformation. As a result, TPFM-DIR does not require explicit integration and multiple handcrafted smoothness penalties that are commonly used in diffeomorphic learning frameworks.

\begin{figure}[!t]
    \centering
    \includegraphics[width=.8\textwidth]{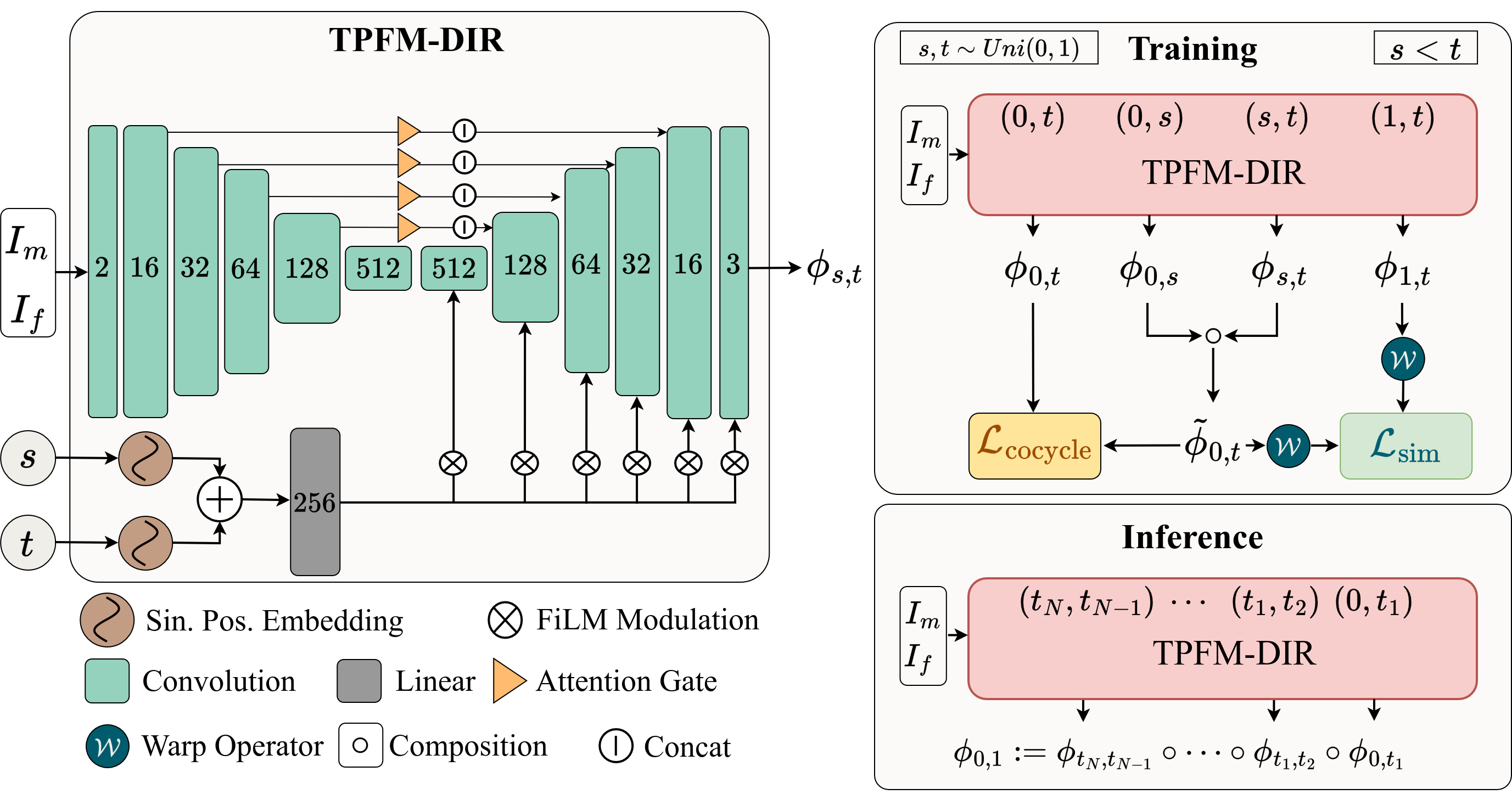}
    \caption{Architecture of TPFM-DIR (left), training scheme (top right), and inference scheme (bottom right). The training diagram illustrates the forward direction; the backward direction follows the formulation shown in Fig.~\ref{fig:pitch}. In all experiments, we use $N=4$ compositions during inference.}
    \label{fig:arch}
\end{figure}

By modeling the flow map directly, the framework captures complex time-dependent deformations. At inference time, global transformations are constructed through a small number of compositions of predicted short-time maps. This strategy improves deformation regularity while maintaining computational efficiency.

Empirically, TPFM-DIR narrows the performance gap between diffeomorphic and deformable registration, matching or surpassing strong deformable baselines while preserving topology. We further demonstrate that TPFM-DIR integrates with diverse backbone architectures, consistently improving both alignment accuracy and topology preservation. The contributions of this work are summarized as follows:

\begin{enumerate}
\item We introduce TPFM-DIR, a flow-map-based formulation of diffeomorphic image registration, that directly models the two-parameter solution of a non-autonomous ODE. Unlike velocity-based parameterizations that learn the generating field, our formulation operates at the level of the flow map itself, enabling structural modeling of the time-dependent deformations.
\item We propose a cocycle-based regularization with formal guarantees that characterize valid solutions of non-autonomous ODEs. This removes the need for explicit numerical integration during training while promoting temporal consistency and invertibility without relying on multiple handcrafted smoothness penalties.
\item We validate TPFM-DIR on nine benchmark datasets spanning 2D and 3D MRI, CT, and ultrasound, demonstrating consistent improvements over strong deformable and diffeomorphic methods.
\end{enumerate}

\section{Related Works}
\label{sec:related}

\paragraph{Deformable vs. Diffeomorphic Registration.} 
Deformable registration methods typically predict dense displacement fields and prioritize alignment accuracy. Recent performance gains are largely driven by increasingly expressive architectures, including vision transformers \cite{transmorph,transmatch,hvit} and correlation or structure aware modules \cite{corrmlp,sacbnet}. However, such models generally lack explicit structural constraints, which may result in folding under large deformations. To ensure anatomical plausibility, Diffeomorphic Image Registration (DIR) seeks smooth, invertible transformations \cite{survey1, survey2}. Classical geodesic formulations (e.g., LDDMM \cite{lddmm}, SyN \cite{syn}) provide strong theoretical guarantees but at high computational cost. Modern learning-based DIR approaches amortize this optimization across datasets \cite{learning-based1,learning-based2,learning-based3,learning-based4,learnin-based5}, typically enforcing regularity through architectural symmetry \cite{symnet} or surrogate penalties such as cycle, inverse, and gradient consistency \cite{cyclemorph,consistency1,consistency2,gradicon,nephi}. These constraints may trade registration accuracy for regularity \cite{survey2} and are usually defined only between image pairs.

\begin{figure}[!t]
    \centering
    \includegraphics[width=\linewidth]{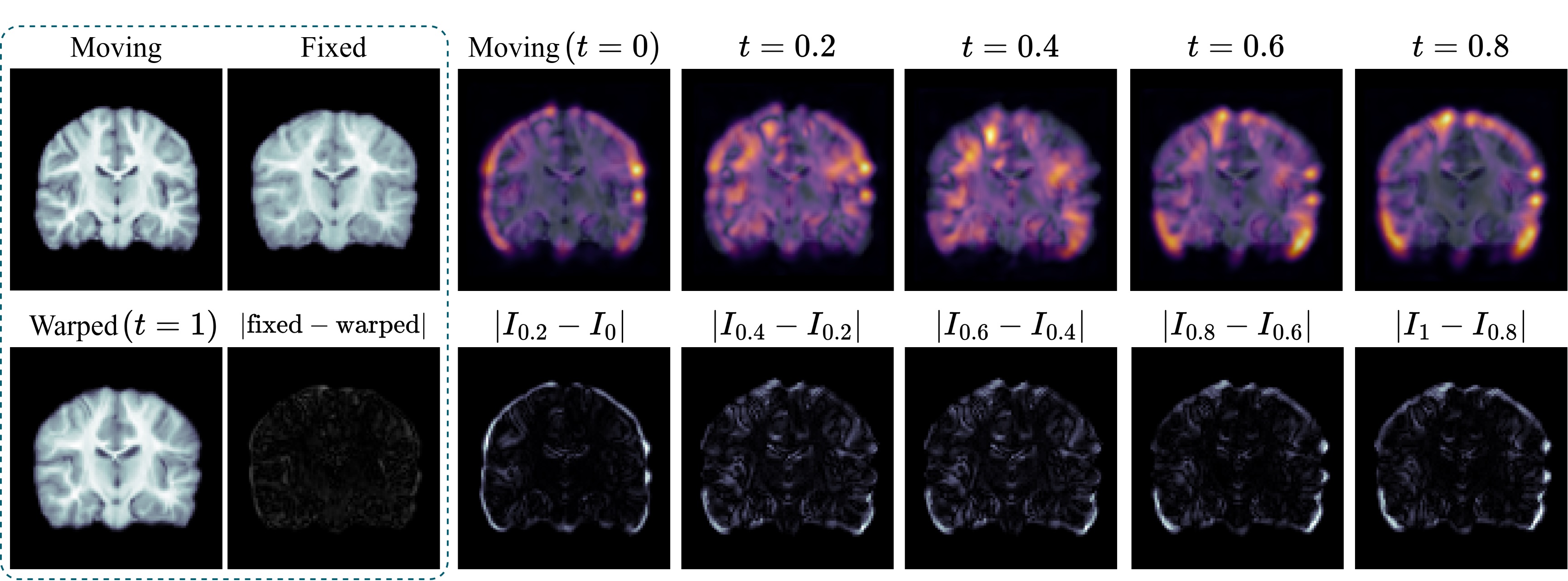}
    \caption{Continuous registration trajectory on CANDI. Top: warped image evolution with overlaid velocity norms computed by \cref{eq:vel} indicating time-varying velocity. Bottom: incremental image differences illustrating temporal smoothness.}
    \label{fig:candi}
\end{figure}

\paragraph{Flow-Based Diffeomorphic Models.} 
Flow-based DIR represents transformations as ODE-generated flows \cite{flow1,flow2,ant}. Autonomous formulations assume stationary velocity fields and enable efficient scaling-and-squaring integration to recover the deformations \cite{symnet,pulpo,voxelmorph,lapirn,liemorph}. However, the stationarity assumption restricts the admissible deformation dynamics, whereas non-autonomo\-us models allow time-varying velocities and provide a more general deformation framework. Classical LDDMM \cite{lddmm} and neural extensions such as NODEO \cite{nodeo} and R2Net \cite{r2net} fall into this category. The increased expressiveness of non-autonomous flows comes at both computational and structural complexity. Existing approaches rely on explicit numerical integration \cite{nodeo,MSODE}, coupling diffeomorphic guarantees to discretization accuracy and solver design. In contrast, our approach directly models the two-parameter flow family, eliminating explicit velocity integration while preserving the structural foundations of flow-based diffeomorphisms.

\section{Proposed Method}
\label{sec:method}

\begin{figure}[!t]
    \centering
    \includegraphics[width=\linewidth]{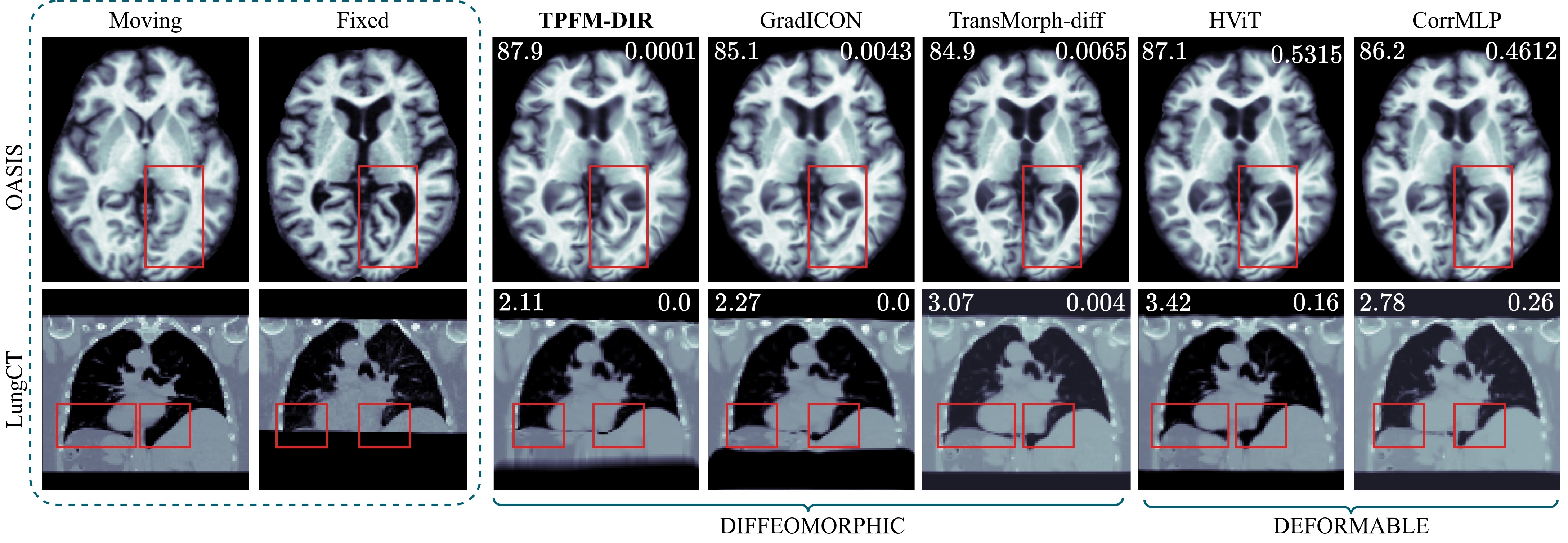}
    \caption{Visual comparisons on OASIS and LungCT. Dice/TRE are shown at top-left corners and $|J|_{<0}\%$ is shown at top-right corners.}
    \label{fig:oasis}
\end{figure}

\subsection{Preliminaries}
\label{sec:prem}

Let $I_m, I_f: \Omega\rightarrow\mathbb{R}$ denote the moving and fixed images defined over a $d$-dimensional spatial domain $\Omega\subset\mathbb{R}^d$. Diffeomorphic registration seeks a transformation $\phi: \Omega\rightarrow\Omega$ that balances image similarity and transformation regularity. To obtain diffeomorphic transformations, many methods model $\phi$ as the flow of an ODE under suitable regularity conditions. A common approach utilizes an \textit{autonomous} ODE \cite{symnet,transmorph}

\begin{gather} 
    \label{eq:aode} 
    \frac{d\phi_t}{dt} = v(\phi_t), \quad \phi_0 = Id, 
\end{gather}

where $v$ is a stationary velocity field, $Id$ denotes the identity mapping, and $\phi_t$ is the deformation at time $t$. This formulation is favored for its computational efficiency via the scaling-and-squaring integration scheme \cite{ss}. On the other hand, non-autonomous ODEs generalize the autonomous case by adopting time-dependent velocity fields \cite{lddmm,nodeo}:

\begin{gather} 
    \label{eq:naode}
    \frac{d\phi_t}{dt} = v(\phi_t, t), \quad \phi_0 = Id.
\end{gather}

While more expressive, non-autonomous ODEs rely on numerical integration during training, coupling deformation structure to time discretization and solver accuracy.

In this work, we shift the focus from the velocity field $v$ to the direct modeling of the \textbf{two-parameter flow map} $\phi_{s,t}:\Omega\rightarrow\Omega$, which represents the transformation from time $s$ to time $t$ and encodes the solution of a non-autonomous ODE, eliminating the need for discretized velocity integration to construct the flow. For a differentiable mapping $\phi_{s,t}$ to be a valid solution of a non-autonomous ODE, it must satisfy the identity condition and the \textbf{cocycle property} \cite{nonauto}:

\begin{gather} 
    \label{eq:cocycle}
    \phi_{s,s} = Id, \quad \ \phi_{r,t} \circ \phi_{s,r} = \phi_{s,t}, \quad \forall s \leq r \leq t \in [0, 1].
\end{gather}

The cocycle property dictates that the mapping from time $s$ to $t$ is equivalent to the composition of intermediate mappings through any intermediate time $r$. Leveraging this structural constraint, we learn $\phi_{s,t}$ directly without explicit velocity parameterization, decoupling flow modeling from velocity integration. In the rest of the paper, the moving image is associated with time $0$, and the fixed image with time $1$.

\subsection{Two-Parameter Flow Maps For Diffeomorphic Image Registration}
\label{sec:tpfm}


The core objective of TPFM-DIR is to directly model the flow map associated with the non-autonomous ODE in \cref{eq:naode}, represented as a two-parameter family of transformations. We parameterize the flow map $\phi_{s, t}$ as

\begin{gather}
    \label{eq:phi_model}
    \phi_{s, t} = Id + (t - s) f_\theta(s, t; I_m, I_f),
\end{gather}

where $f_\theta(.)$ is a time-dependent neural network with parameters $\theta$, conditioned on the moving and fixed images and time parameters $s$ and $t$. This construction enforces $\phi_{s,s}=Id$ and defines all transformations in a shared spatial frame, enabling valid compositions. This parameterization alone does not guarantee a valid flow map; the cocycle regularization imposes the necessary structural constraints to promote consistency with a continuous-time flow. We implement $f_\theta$ using a doubly time-embedded U-Net (\cref{fig:arch}) with sinusoidal embeddings for time samples $s$ and $t$ \cite{meanflow,flowmatching}. Smoothness and invertibility are further promoted through the shared-coordinate formulation and cocycle regularization.

To train the model without numerical integration, we propose a temporal similarity loss and an \textit{anchored} cocycle regularization with fixed endpoints. Following the conventional approaches of time-dependent architectures, we sample $s,t \sim Uni[0,1]$ with $s\leq t$ \cite{meanflow}. To ensure unbiased learning and memory efficiency, we employ a symmetric training scheme that alternates which branch carries the composed transformation at each epoch. This avoids doubled memory usage while preserving symmetry. We define the similarity loss as

\begin{equation}
    \label{eq:sim-loss}
    \mathcal{L}_{\text{sim}}^{s, t} =
    \begin{cases}
        -NCC\big(I_m(\phi_{s, t} \circ \phi_{0, s}),\ I_f(\phi_{1,t})\big) & \text{even epochs} \\
        -NCC\big(I_m(\phi_{0, s}),\ I_f(\phi_{t, s} \circ \phi_{1,t})\big) & \text{odd epochs}
    \end{cases}
\end{equation}

This aligns the forward trajectory of the moving image with the backward trajectory of the fixed image at time $t$. To enforce the structural consistency of the flow, we introduce the \textit{anchored} cocycle regularization:

\begin{equation}
    \label{eq:reg}
    \mathcal{L}_{\text{cocycle}}^{s, t} =
    \begin{cases}
        \|\phi_{s, t} \circ \phi_{0, s} - \phi_{0, t}\|_2 & \text{even epochs} \\
        \|\phi_{t, s} \circ \phi_{1, t} - \phi_{1, s}\|_2 & \text{odd epochs} 
    \end{cases}
\end{equation}

\Cref{fig:pitch} illustrates the training process. In practice, compositions are implemented via spatial warping with linear interpolation. Minimizing \cref{eq:reg} enforces the anchored cocycle constraints:

\begin{gather}
    \label{eq:partial-coc}
    \phi_{s, t} \circ \phi_{0, s} = \phi_{0, t},\quad \phi_{t, s} \circ \phi_{1, t} = \phi_{1, s}
\end{gather}

Hence, the total loss becomes 

\begin{gather}
    \label{eq:loss}
    \mathcal{L}^{s, t} = \mathcal{L}_{\text{sim}}^{s, t} + \lambda \mathcal{L}_{\text{cocycle}}^{s, t}
\end{gather}

where $\lambda$ balances the trajectory consistency. In our experiments we show that a value $\lambda=10$ leads to the most promising results. Note that enforcing the full cocycle constraint for arbitrary triplets $(r, s, t)$ would require evaluating multiple nested compositions per iteration and is substantially more expensive. We therefore enforce only the anchored cocycle constraints in \cref{eq:partial-coc}, which are computationally more efficient while remaining sufficient in practice:

\begin{theorem}
    \label{thm}
    Under standard regularity assumptions, a deformation $\phi_{s,t}$ satisfying the identity condition and the anchored cocycle properties of \cref{eq:partial-coc} is a two-parameter diffeomorphism solving a non-autonomous ODE of \cref{eq:naode} on a bounded open domain $\Omega$.
\end{theorem}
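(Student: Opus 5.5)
The strategy is to reduce everything to the single one-parameter family $\psi_t := \phi_{0,t}$ anchored at the moving image. The "standard regularity assumptions" I will impose are these: $(s,t)\mapsto\phi_{s,t}$ is $C^1$ jointly in $(s,t,x)$; each $\phi_{0,t}$ is a $C^1$ diffeomorphism of $\Omega$ onto its image inside $\Omega$; and $\partial_t\phi_{0,t}$ is bounded and Lipschitz in $x$, uniformly in $t$. The cocycle relations are to hold on the closure of $\Omega$, so that compositions stay in the domain.

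\textbf{Step 1 (full cocycle).} From the first anchored identity in \cref{eq:partial-coc}, $\phi_{s,t}\circ\psi_s=\psi_t$. Since $\psi_s$ is invertible, this gives
\begin{equation*}
\phi_{s,t}=\psi_t\circ\psi_s^{-1}\quad\text{for all } s\le t .
\end{equation*}
Consequently, for $s\le r\le t$,
\begin{equation*}
\phi_{r,t}\circ\phi_{s,r}=\psi_t\circ\psi_r^{-1}\circ\psi_r\circ\psi_s^{-1}=\phi_{s,t},
\end{equation*}
which is the full cocycle property of \cref{eq:cocycle}. In addition, each $\phi_{s,t}$ is a composition of diffeomorphisms and is therefore itself a diffeomorphism. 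The second anchored identity, $\phi_{t,s}\circ\phi_{1,t}=\phi_{1,s}$, is treated in the same way with $\chi_s:=\phi_{1,s}$. Evaluating it at $s=0$ shows that the backward maps invert the forward ones along the trajectory. This supplies the backward-time flow and identifies $\phi_{t,s}$ with $\phi_{s,t}^{-1}$.

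\textbf{Step 2 (generating velocity).} Define the velocity on the flow by
\begin{equation*}
v(y,t):=\partial_\tau\phi_{t,\tau}(y)\big|_{\tau=t}.
\end{equation*}
Under the parameterization \cref{eq:phi_model} this equals $f_\theta(t,t;I_m,I_f)(y)$, so it is continuous in $(y,t)$ and inherits a spatial Lipschitz bound. Now differentiate the cocycle identity $\phi_{s,t+h}=\phi_{t,t+h}\circ\phi_{s,t}$ in $h$ at $h=0$. This yields $\partial_t\phi_{s,t}=v(\phi_{s,t},t)$ with $\phi_{s,s}=Id$. Hence $\phi_{s,t}$ solves \cref{eq:naode}, started at time $s$.

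\textbf{Step 3 (uniqueness and diffeomorphism).} Because $v$ is Lipschitz in space, the Picard--Lindel\"of theorem gives uniqueness of solutions. So $\phi_{s,t}$ is the flow of $v$, and classical ODE theory guarantees that this flow is a $C^1$ diffeomorphism depending smoothly on $(s,t)$. Boundedness of $\Omega$, together with the assumption that trajectories remain in $\Omega$, rules out blow-up or exit on $[0,1]$.

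\textbf{Main obstacle.} Step 1 uses invertibility of $\psi_s$, but invertibility is part of the conclusion, so assuming it outright would be circular. To avoid this, I would try to derive it instead of assuming it. First, apply Step 2 only to the anchored family: differentiating $\phi_{s,s+h}\circ\psi_s=\psi_{s+h}$ gives
\begin{equation*}
\partial_s\psi_s=v(\psi_s,s),\qquad \psi_0=Id,
\end{equation*}
which does not require any inverse. Liouville's formula then gives
\begin{equation*}
\det D\psi_s=\exp\!\int_0^s \operatorname{div}v(\psi_r,r)\,dr>0 ,
\end{equation*}
and uniqueness of ODE solutions gives injectivity of $\psi_s$. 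Together these make each $\psi_s$ a diffeomorphism onto its image, and Step 1 can then proceed legitimately. The remaining delicate point is the space–time regularity of $v$ needed for the differentiation in Step 2, which I would take as part of the standing assumptions.
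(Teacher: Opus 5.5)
Your argument is essentially sound at the paper's level of rigor, but you reach the key fact, invertibility of $\phi_{0,t}$, by a different route.

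\textbf{Comparison with the paper.} The paper argues discretely. It partitions $[0,t]$ into steps of length $\delta<1/L_f$ and observes that each $\phi_{t_i,t_{i+1}}=Id+\delta f_\theta$ satisfies $\|D_x\phi_{t_i,t_{i+1}}-I\|<1$, so each step is a local diffeomorphism by the Neumann series and the inverse function theorem. It then uses the first anchored identity to write $\phi_{0,t}=\phi_{t_{m-1},t_m}\circ\cdots\circ\phi_{t_0,t_1}$, a composition of diffeomorphisms. The full cocycle follows exactly as in your Step 1, and the passage to an ODE is delegated to the standard characterization of cocycles in the non-autonomous literature.

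You instead differentiate the anchored identity to show $\partial_s\psi_s=f_\theta(s,s)(\psi_s)$. You then get injectivity from backward uniqueness and $\det D\psi_s>0$ from Liouville.

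For invertibility, the paper's route needs only a spatial Lipschitz bound uniform over all $(s,t)$ and no time regularity, and it mirrors the $N$-step composition used at inference. It does, however, also need global injectivity of each short step, which it asserts rather than proves. That injectivity follows from $|\phi_{t_i,t_{i+1}}(x)-\phi_{t_i,t_{i+1}}(y)|\ge(1-\delta L_f)|x-y|$ on a convex domain.

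Your route needs $f_\theta$ to be differentiable in its time arguments. In exchange, it uses the Lipschitz bound only on the diagonal and gets injectivity without convexity. It also identifies the generator explicitly as $f_\theta(t,t)$, tying the theorem to Corollary~\ref{corr}, and it proves the ODE claim instead of citing it.

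\textbf{Two points need repair.}

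First, evaluating the second anchored identity at $s=0$ gives $\phi_{t,0}\circ\phi_{1,t}=\phi_{1,0}$, which involves only backward maps. The two anchored identities are jointly satisfied by the forward flow of one field together with the backward flow of an unrelated field. So they cannot by themselves identify $\phi_{t,s}$ with $\phi_{s,t}^{-1}$. What links the two families is that a single continuous network produces both. Differentiating $\phi_{t,t-h}\circ\chi_t=\chi_{t-h}$ at $h=0$ gives $\partial_t\chi_t=f_\theta(t,t)(\chi_t)$ with $\chi_1=Id$, which is the same generator. Uniqueness then yields $\chi_t=\psi_t\circ\psi_1^{-1}$, and hence $\phi_{t,s}=\phi_{s,t}^{-1}$. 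This identification is not needed for the theorem, which only involves $s\le r\le t$. The paper's own claim $\phi_{s,0}=\phi_{0,s}^{-1}$ likewise tacitly extends the first identity to $t<s$.

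Second, ``diffeomorphism onto its image'' does not suffice for Step 1. The first anchored identity determines $\phi_{s,t}$ only on $\psi_s(\Omega)$, so you need $\psi_s(\Omega)=\Omega$. In your framework this follows from backward solvability: solve from any $y$ at time $s$ back to time $0$. That works on $\mathbb{R}^d$, since $f_\theta(t,t)$ is globally Lipschitz, or on $\Omega$ if you assume invariance in both time directions. The paper's proof glosses over the same surjectivity issue.
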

\begin{proof}
    Appendix A.
\end{proof}

An immediate consequence of \cref{eq:phi_model} is that the instantaneous velocity admits a closed-form expression without numerical differentiation:

\begin{corollary}
    \label{corr}
    For the flow map $\phi_{s,t}$ parameterized as \cref{eq:phi_model}, the instantaneous velocity can be analytically obtained via
    \begin{gather}
        \label{eq:vel}
        v_t = f_\theta(t, t; I_m, I_f).
    \end{gather}
\end{corollary}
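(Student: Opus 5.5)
The plan is to take as given, from \cref{thm}, that $\phi_{s,t}$ is a genuine two-parameter flow of the non-autonomous ODE \cref{eq:naode}. The instantaneous velocity is then read off directly from the closed form \cref{eq:phi_model}. For a two-parameter flow, the velocity at time $t$ is the time derivative of the flow leaving $t$, evaluated at zero elapsed time:
\begin{gather*}
v_t(x) = \frac{\partial}{\partial \tau}\phi_{t,\tau}(x)\Big|_{\tau=t}.
\end{gather*}
This is consistent with \cref{eq:naode}. By the cocycle property, $\phi_{0,\tau} = \phi_{t,\tau}\circ\phi_{0,t}$ for $\tau \ge t$. Differentiating in $\tau$ at $\tau=t$, and using $\phi_{t,t}=Id$, gives $\frac{d}{d\tau}\phi_{0,\tau}\big|_{\tau=t} = v_t(\phi_{0,t})$. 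So the quantity defined above is exactly the field driving the ODE, taken as a one-sided (right) derivative.

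Next, substitute the parameterization \cref{eq:phi_model}:
\begin{gather*}
\phi_{t,\tau}(x) = x + (\tau - t)\, f_\theta(t,\tau; I_m, I_f)(x).
\end{gather*}
Differentiating in $\tau$ by the product rule gives
\begin{gather*}
\frac{\partial}{\partial\tau}\phi_{t,\tau}(x) = f_\theta(t,\tau)(x) + (\tau-t)\,\partial_\tau f_\theta(t,\tau)(x).
\end{gather*}
Evaluating at $\tau=t$ kills the second term, provided $\partial_\tau f_\theta$ is bounded near the diagonal. This leaves $v_t = f_\theta(t,t; I_m, I_f)$.

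The one point needing care is this regularity of $f_\theta$ in its second time argument. The network is a composition of smooth sinusoidal time embeddings with a U-Net whose activations are Lipschitz, or $C^1$ if smooth activations are used. Hence $f_\theta$ is continuous in $\tau$, and in fact locally Lipschitz in $\tau$.

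If only continuity of $f_\theta$ in $\tau$ is available, I would instead compute the difference quotient directly:
\begin{gather*}
\frac{\phi_{t,t+h}(x) - \phi_{t,t}(x)}{h} = f_\theta(t,t+h)(x).
\end{gather*}
This tends to $f_\theta(t,t)(x)$ as $h\to 0^+$ by continuity alone, with no derivative of $f_\theta$ needed. This version is cleaner, and I would present it as the main argument. It also makes plain that no numerical differentiation or integration is involved.
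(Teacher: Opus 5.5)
Your proposal is correct and takes essentially the paper's approach. The paper likewise defines $v(x,t)=\partial_t\phi_{s,t}(x)\big|_{s=t}$, differentiates \cref{eq:phi_model} by the product rule, and uses the vanishing factor $(t-s)$ to eliminate the $\partial_t f_\theta$ term. Your difference-quotient variant needs only continuity of $f_\theta$ in its second time argument, and your cocycle computation shows why this diagonal derivative is the field driving \cref{eq:naode}, using only the anchored relation $\phi_{t,\tau}\circ\phi_{0,t}=\phi_{0,\tau}$ rather than the full theorem; the paper simply takes that as the definition.
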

\begin{proof}
    Appendix A.
\end{proof}

This allows direct extraction of the velocity fields for further investigations (see \cref{fig:candi}). The complete training and inference schema can be viewed in \cref{fig:pitch,fig:arch}.

Uniform sampling of $s$ and $t$ biases the time-difference distribution toward small increments, as shorter intervals occur more frequently under uniform sampling. Consequently, large transitions are relatively underrepresented during optimization, which may lead to mild deviations from ideal flow structure when directly predicting $\phi_{0,1}$ (see Appendix B for a complete discussion). Thus, although $\phi_{0,1}$ can be predicted in a single pass at inference, we partition the interval into $N$ subintervals and compose locally predicted maps to better approximate a continuous-time evolution at inference:

\begin{equation}
    \label{eq:integrate}
    \phi_{0, 1} = \phi_{t_N,1} \circ \dots \circ \phi_{0,t_1},
\end{equation}

Each segment corresponds to a well-trained local deformation, improving stability and reducing the risk of foldings. In our experiments, we find that a $N=4$ compositions suffices for constructing an accurate diffeomorphic deformation. 

\section{Experimental Results}
\label{sec:exp}
Unless otherwise specified, all experiments use $\lambda=10$ (\cref{eq:loss}) and inference uses 4 compositions (\cref{eq:integrate}). TPFM-DIR is optimized with AdamW with learning rate $10^{-4}$ and batch size of 1. Training is performed for 200 epochs on all datasets except LungCT (100 epochs). We set a windows size of 7 for the NCC loss across all experiments. All experiments are conducted on a single NVIDIA RTX 3090 GPU (24GB VRAM). Additional implementation details are provided in Appendix D.

\begin{table}[!t]
    \centering
    \caption{Quantitative comparison on the OASIS and IXI datasets. The best result per metric is shown in \textbf{bold}. The second-best diffeomorphic and deformable results are highlighted in \textcolor{blue}{blue} and \textcolor{red}{red}, respectively.}
    \resizebox{\textwidth}{!}{
    \begin{tabular}{lcccccccc}
        \toprule
        & \multicolumn{4}{c}{\textbf{OASIS}} & \multicolumn{4}{c}{\textbf{IXI}} \\
        \cmidrule(lr){2-5} \cmidrule(lr){6-9}
        \textbf{Method} 
        & \textbf{Dice $\uparrow$} 
        & $|J|_{<0}\% \downarrow$ 
        & \textbf{HD95} $\downarrow$ 
        & \textbf{SDLogJ} $\downarrow$ 
        & \textbf{Dice $\uparrow$} 
        & $|J|_{<0}\% \downarrow$ 
        & \textbf{HD95} $\downarrow$ 
        & \textbf{SDLogJ} $\downarrow$ \\
        \midrule
        LDDMM & 76.59 $\pm$ 2.42 & 0.0064 $\pm$ 0.0051 & 3.89 $\pm$ 0.93 & 0.012 $\pm$ 0.009 & 67.90 $\pm$ 1.15 & 0.0034 $\pm$ 0.0021 & 4.11 $\pm$ 1.02 & 0.058 $\pm$ 0.020 \\
        \midrule
        CycleMorph & 81.93 $\pm$ 2.14 & 0.0211 $\pm$ 0.0091 & 2.36 $\pm$ 0.81 & 0.040 $\pm$ 0.026 & 74.67 $\pm$ 1.33 & 0.0224 $\pm$ 0.0110 & 3.34 $\pm$ 0.87 & 0.068 $\pm$ 0.032 \\
        GradICON & \textcolor{blue}{83.74 $\pm$ 1.42} & 0.0039 $\pm$ 0.0012 & \textcolor{blue}{2.09 $\pm$ 0.36} & 0.011 $\pm$ 0.006 & \textcolor{blue}{76.43 $\pm$ 1.28} & \textcolor{blue}{0.0018 $\pm$ 0.0022} & \textcolor{blue}{3.19 $\pm$ 0.55} & \textcolor{blue}{0.015 $\pm$ 0.006} \\
        TransMorph-diff & 83.51 $\pm$ 1.52 & 0.0066 $\pm$ 0.0073 & 2.35 $\pm$ 0.76 & 0.071 $\pm$ 0.041 & 75.98 $\pm$ 2.01 & 0.0142 $\pm$ 0.0098 & 3.81 $\pm$ 0.89 & 0.102 $\pm$ 0.048 \\
        R2Net & 80.11 $\pm$ 1.53 & 0.0093 $\pm$ 0.0054 & 2.15 $\pm$ 0.49 & 0.081 $\pm$ 0.055 & 73.39 $\pm$ 1.77 & 0.0031 $\pm$ 0.0016 & 3.69 $\pm$ 0.66 & 0.042 $\pm$ 0.026 \\
        NODEO & 81.97 $\pm$ 1.39 & \textcolor{blue}{0.0024 $\pm$ 0.0010} & 2.18 $\pm$ 0.71 & \textcolor{blue}{0.008 $\pm$ 0.005} & 75.68 $\pm$ 1.82 & 0.0031 $\pm$ 0.0015 & 3.46 $\pm$ 0.84 & 0.017 $\pm$ 0.008 \\
        NePhi & 80.09 $\pm$ 1.17 & 0.0056 $\pm$ 0.0023 & 3.28 $\pm$ 0.44 & 0.022 $\pm$ 0.016 & 75.34 $\pm$ 1.78 & 0.0029 $\pm$ 0.0016 & 3.55 $\pm$ 0.55 & 0.018 $\pm$ 0.011 \\
        PULPo & 80.51 $\pm$ 0.99 & 0.0527 $\pm$ 0.0240 & 3.45 $\pm$ 0.98 & 0.077 $\pm$ 0.031 & 73.91 $\pm$ 2.11 & 0.0066 $\pm$ 0.0043 & 3.30 $\pm$ 0.39 & 0.028 $\pm$ 0.016 \\
        \midrule
        TransMorph & 84.11 $\pm$ 1.30 & 1.0665 $\pm$ 0.5631 & 2.26 $\pm$ 0.68 & 0.821 $\pm$ 0.252 & 77.72 $\pm$ 1.76 & 1.2574 $\pm$ 0.7574 & 3.71 $\pm$ 0.80 & 1.083 $\pm$ 0.742 \\
        TransMatch & 83.48 $\pm$ 1.80 & 0.1493 $\pm$ 0.0151 & 2.23 $\pm$ 0.66 & 0.741 $\pm$ 0.344 & 76.80 $\pm$ 1.85 & 0.0406 $\pm$ 0.0120 & 3.15 $\pm$ 0.41 & 0.580 $\pm$ 0.411 \\
        DiffuseMorph & 79.13 $\pm$ 2.03 & 1.1563 $\pm$ 0.2512 & 3.51 $\pm$ 0.77 & 0.336 $\pm$ 0.189 & 71.35 $\pm$ 2.05 & 1.6042 $\pm$ 0.7212 & 5.48 $\pm$ 1.45 & 0.938 $\pm$ 0.508 \\
        DiffuseReg & 80.12 $\pm$ 1.22 & 1.0877 $\pm$ 0.4312 & 3.73 $\pm$ 1.01 & 0.377 $\pm$ 0.130 & 72.13 $\pm$ 1.88 & 0.5279 $\pm$ 0.2834 & 5.04 $\pm$ 1.12 & 0.488 $\pm$ 0.276 \\
        HViT & \textcolor{red}{85.07 $\pm$ 1.05} & 0.4812 $\pm$ 0.0114 & \textcolor{red}{1.92 $\pm$ 0.46} & 0.631 $\pm$ 0.412 & \textcolor{red}{80.67 $\pm$ 1.67} & 0.5933 $\pm$ 0.1028 & \textcolor{red}{2.98 $\pm$ 0.46} & 0.679 $\pm$ 0.290 \\
        CorrMLP & 84.66 $\pm$ 1.24 & 0.4640 $\pm$ 0.1776 & 2.23 $\pm$ 0.62 & 0.608 $\pm$ 0.278 & 77.54 $\pm$ 1.68 & 0.3675 $\pm$ 0.2168 & 3.15 $\pm$ 0.39 & 0.547 $\pm$ 0.231 \\
        SACB-Net & 83.17 $\pm$ 1.18 & \textcolor{red}{0.0184 $\pm$ 0.0093} & 2.33 $\pm$ 0.48 & \textcolor{red}{0.182 $\pm$ 0.087} & 78.15 $\pm$ 1.87 & \textcolor{red}{0.0201 $\pm$ 0.0099} & 3.19 $\pm$ 0.51 & \textcolor{red}{0.144 $\pm$ 0.056} \\
        DGIR & 82.37 $\pm$ 1.33 & 0.0325 $\pm$ 0.0076 & 3.29 $\pm$ 0.91 & 0.392 $\pm$ 0.182 & 75.34 $\pm$ 1.55 & 0.0766 $\pm$ 0.0101 & 3.22 $\pm$ 0.72 & 0.193 $\pm$ 0.101 \\
        \midrule
        \textbf{TPFM-DIR} & \textbf{87.08 $\pm$ 1.07} & \textbf{0.0019 $\pm$ 0.0007} & \textbf{1.78 $\pm$ 0.44} & \textbf{0.006 $\pm$ 0.002} & \textbf{82.52 $\pm$ 0.98} & \textbf{0.0015 $\pm$ 0.0006} & \textbf{2.95 $\pm$ 0.68} & \textbf{0.007 $\pm$ 0.003} \\
        \bottomrule
    \end{tabular}}
    \label{tab:oasis-ixi}
\end{table}

\subsection{Datasets and Metrics}
\label{sec:data}

We evaluate TPFM-DIR on nine datasets spanning multiple modalities, anatomical regions, and spatial dimensions. Brain MRI benchmarks include OASIS~\cite{oasis}, IXI\footnote{\url{https://brain-development.org/ixi-dataset}}, Mindboggle101~\cite{mindboggle}, LPBA40~\cite{lpba}, and CANDI~\cite{candi}. For thoracic and abdominal CT, we use LungCT and AbdomenCT from the Learn2Reg challenge\footnote{\url{https://learn2reg.grand-challenge.org/}}, representing large respiratory motion and multi-organ abdominal alignment, respectively. To assess 2D registration, we use ACDC~\cite{acdc} (cardiac MRI) and CAMUS~\cite{camus} (cardiac ultrasound) datasets.

For datasets with segmentation masks, we report the Dice similarity coefficient as the primary overlap metric. Surface alignment is evaluated using the 95th percentile Hausdorff Distance (HD95) and the Average Symmetric Surface Distance (ASSD). For LungCT, where expert landmarks are available, we compute Target Registration Error (TRE), and additionally report Structural Similarity Index (SSIM) to assess intensity consistency. Diffeomorphic and smoothness properties are evaluated via percentage of voxels with negative Jacobian determinant ($|J|_{<0}\%$) and the standard deviation of the logarithm of the Jacobian determinant (SDLogJ). To enhance readability of the tables, we only report the main metrics for each dataset, and present the tables with complete metrics in Appendix F.

\begin{figure}[!t]
    \centering
    \includegraphics[width=\linewidth]{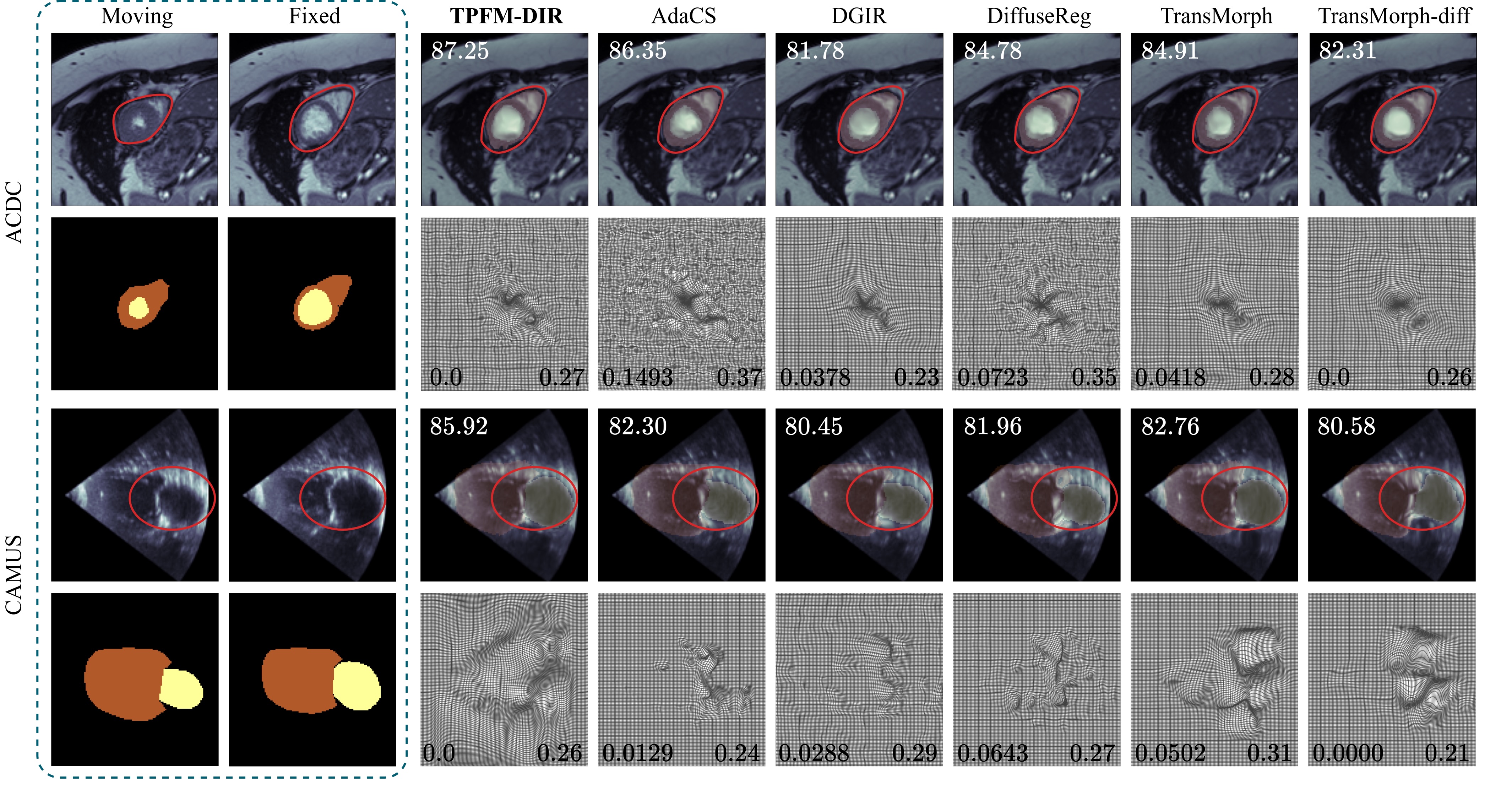}
    \caption{Qualitative comparisons on ACDC and CAMUS. Warped images with overlaid segmentations are shown with Dice reported at the top-left corners; $|J|_{<0}\%$ and SDLogJ are shown at the bottom-left and bottom-right of the deformation grids, respectively.}
    \label{fig:camus}
\end{figure}

Our evaluations include both inter-subject (brain MRI and AbdomenCT) and intra-subject (LungCT and cardiac sequences) registration. For inter-subject experiments, subject pairs are randomly sampled following a fixed protocol. Complete dataset splits and preprocessing are detailed in Appendix C.

\subsection{Results}
\label{sec:results}
Across all datasets, we compare TPFM-DIR against a broad range of state-of-the-art methods. The diffeomorphic baselines include CycleMorph, GradICON, TransMorph-diff, R2Net, NODEO, NePhi, and PULPo \cite{cyclemorph,gradicon,transmorph,r2net,nodeo,nephi,pulpo}. Deformable baselines include TransMorph, TransMatch, DiffuseMorph, DiffuseReg, HViT, CorrMLP, SACB-Net, and DGIR \cite{transmatch,transmorph,diffusemorph,diffusereg,hvit,corrmlp,sacbnet,dgir}. Among the diffeomorphic methods, R2Net and NODEO adopt non-autonomous ODE formulations. Moreover, we include comparisons with LDDMM \cite{lddmm} as a classical diffeomorphic method based on non-autonomous ODEs. For 2D datasets, we evaluate only methods with official 2D implementations and additionally include AdaCS as a strong baseline for ACDC and CAMUS \cite{adacs}.

\begin{table}[!t]
    \centering
    \caption{Quantitative comparison on the LPBA40, Mindboggle101, and CANDI datasets. The best result per metric is shown in \textbf{bold}. The second-best diffeomorphic and deformable results are highlighted in \textcolor{blue}{blue} and \textcolor{red}{red}, respectively.}
    \resizebox{\textwidth}{!}{
    \begin{tabular}{lccccccccc}
        \toprule
        & \multicolumn{3}{c}{\textbf{LPBA40}} 
        & \multicolumn{3}{c}{\textbf{Mindboggle101}} 
        & \multicolumn{3}{c}{\textbf{CANDI}} \\
        \cmidrule(lr){2-4} \cmidrule(lr){5-7} \cmidrule(lr){8-10}
        \textbf{Method} 
        & \textbf{Dice $\uparrow$} 
        & $|J|_{<0}\% \downarrow$ 
        & \textbf{HD95} $\downarrow$ 
        & \textbf{Dice $\uparrow$} 
        & $|J|_{<0}\% \downarrow$ 
        & \textbf{HD95} $\downarrow$ 
        & \textbf{Dice $\uparrow$} 
        & $|J|_{<0}\% \downarrow$ 
        & \textbf{HD95} $\downarrow$ \\
        \midrule
        LDDMM & 69.23 $\pm$ 1.78 & 0.0011 $\pm$ 0.0004 & 7.38 $\pm$ 1.11 & 64.57 $\pm$ 2.08 & 0.0061 $\pm$ 0.0039 & 7.79 $\pm$ 0.84 & 78.52 $\pm$ 1.66 & 0.0017 $\pm$ 0.0007 & 2.44 $\pm$ 0.31 \\
        \midrule
        CycleMorph & 71.90 $\pm$ 1.41 & 0.0064 $\pm$ 0.0032 & 6.76 $\pm$ 0.94 & 70.64 $\pm$ 1.48 & 0.1282 $\pm$ 0.0873 & 7.75 $\pm$ 0.98 & 81.37 $\pm$ 1.25 & 0.0027 $\pm$ 0.0013 & 2.23 $\pm$ 0.29 \\
        GradICON & 74.21 $\pm$ 1.26 & 0.0011 $\pm$ 0.0006 & 6.89 $\pm$ 0.89 & \textcolor{blue}{70.78 $\pm$ 1.52} & 0.0117 $\pm$ 0.0043 & 7.59 $\pm$ 1.12 & \textcolor{blue}{83.39 $\pm$ 1.09} & 0.0015 $\pm$ 0.0008 & 2.14 $\pm$ 0.27 \\
        TransMorph-diff & 70.98 $\pm$ 1.52 & 0.0086 $\pm$ 0.0044 & 7.22 $\pm$ 1.03 & 69.45 $\pm$ 1.66 & 0.0364 $\pm$ 0.0157 & 7.89 $\pm$ 1.05 & 83.20 $\pm$ 1.14 & 0.0026 $\pm$ 0.0012 & \textcolor{blue}{2.06 $\pm$ 0.26} \\
        R2Net & 70.71 $\pm$ 1.49 & 0.0017 $\pm$ 0.0009 & 7.13 $\pm$ 1.01 & 68.34 $\pm$ 1.69 & 0.0086 $\pm$ 0.0031 & 8.97 $\pm$ 1.44 & 78.79 $\pm$ 1.49 & 0.0032 $\pm$ 0.0016 & 2.78 $\pm$ 0.32 \\
        NODEO & 72.73 $\pm$ 1.25 & \textcolor{blue}{0.0008 $\pm$ 0.0004} & 7.25 $\pm$ 0.97 & 70.18 $\pm$ 1.58 & 0.0023 $\pm$ 0.0011 & 7.70 $\pm$ 0.96 & 80.60 $\pm$ 1.26 & 0.0012 $\pm$ 0.0007 & 2.46 $\pm$ 0.29 \\
        NePhi & 72.39 $\pm$ 1.33 & 0.0024 $\pm$ 0.0016 & 6.75 $\pm$ 0.91 & 67.97 $\pm$ 1.91 & \textcolor{blue}{0.0019 $\pm$ 0.0005} & 8.20 $\pm$ 1.23 & 79.15 $\pm$ 1.46 & \textcolor{blue}{0.0009 $\pm$ 0.0005} & 2.36 $\pm$ 0.30 \\
        PULPo & \textcolor{blue}{74.82 $\pm$ 1.11} & 0.0009 $\pm$ 0.0005 & \textcolor{blue}{6.22 $\pm$ 0.78} & 68.69 $\pm$ 1.74 & 0.0287 $\pm$ 0.0104 & 7.92 $\pm$ 1.07 & 78.38 $\pm$ 1.57 & 0.0136 $\pm$ 0.0047 & 2.89 $\pm$ 0.33 \\
        \midrule
        TransMorph & 74.42 $\pm$ 1.33 & 0.5275 $\pm$ 0.1835 & 6.88 $\pm$ 0.85 & 71.46 $\pm$ 1.63 & 1.4883 $\pm$ 0.5328 & \textbf{7.27 $\pm$ 0.92} & 83.48 $\pm$ 1.10 & 0.1059 $\pm$ 0.0388 & 2.13 $\pm$ 0.22 \\
        TransMatch & 74.13 $\pm$ 1.31 & 0.0251 $\pm$ 0.0116 & \textcolor{red}{6.53 $\pm$ 0.83} & 71.78 $\pm$ 1.49 & 0.2483 $\pm$ 0.1020 & 7.63 $\pm$ 1.08 & 82.60 $\pm$ 1.18 & 0.0160 $\pm$ 0.0061 & 2.28 $\pm$ 0.25 \\
        DiffuseMorph & 73.11 $\pm$ 1.42 & 0.3610 $\pm$ 0.1429 & 7.21 $\pm$ 1.06 & 68.22 $\pm$ 1.77 & 1.7413 $\pm$ 0.6521 & 8.43 $\pm$ 1.30 & 77.22 $\pm$ 1.51 & 0.1277 $\pm$ 0.0412 & 2.53 $\pm$ 0.30 \\
        DiffuseReg & 72.98 $\pm$ 1.46 & 0.0723 $\pm$ 0.0289 & 7.27 $\pm$ 1.04 & 71.88 $\pm$ 1.54 & 1.0823 $\pm$ 0.4882 & 7.84 $\pm$ 1.03 & 79.31 $\pm$ 1.38 & 0.0322 $\pm$ 0.0125 & 2.31 $\pm$ 0.26 \\
        HViT & \textcolor{red}{76.57 $\pm$ 1.18} & 0.4428 $\pm$ 0.1655 & 6.74 $\pm$ 0.89 & 71.80 $\pm$ 1.72 & 1.4051 $\pm$ 0.6153 & 7.56 $\pm$ 1.11 & \textcolor{red}{83.67 $\pm$ 1.07} & 0.1368 $\pm$ 0.0435 & \textcolor{red}{2.09 $\pm$ 0.21} \\
        CorrMLP & 75.72 $\pm$ 1.27 & 0.0145 $\pm$ 0.0068 & 6.55 $\pm$ 0.87 & 71.92 $\pm$ 1.68 & 0.1895 $\pm$ 0.0782 & 7.65 $\pm$ 1.06 & 81.96 $\pm$ 1.25 & 0.0098 $\pm$ 0.0043 & 2.15 $\pm$ 0.26 \\
        SACB-Net & 73.91 $\pm$ 1.35 & \textcolor{red}{0.0099 $\pm$ 0.0041} & 6.70 $\pm$ 0.93 & \textcolor{red}{72.75 $\pm$ 1.57} & 0.3542 $\pm$ 0.1433 & \textcolor{red}{7.46 $\pm$ 0.97} & 82.75 $\pm$ 1.22 & \textcolor{red}{0.0047 $\pm$ 0.0021} & 2.19 $\pm$ 0.27 \\
        DGIR & 71.99 $\pm$ 1.44 & 0.0189 $\pm$ 0.0073 & 7.01 $\pm$ 0.89 & 68.12 $\pm$ 1.93 & \textcolor{red}{0.0674 $\pm$ 0.0213} & 7.94 $\pm$ 1.08 & 80.13 $\pm$ 1.39 & 0.0077 $\pm$ 0.0028 & 2.74 $\pm$ 0.41 \\
        \midrule
        \textbf{TPFM-DIR} & \textbf{77.87 $\pm$ 1.09}\ \ & \textbf{0.0005 $\pm$ 0.0002}\ \ & \textbf{5.85 $\pm$ 1.63}\ \ & \textbf{74.79 $\pm$ 0.98}\ \ & \textbf{0.0012 $\pm$ 0.0017}\ \ & \textcolor{blue}{7.53 $\pm$ 1.86}\ \ & \textbf{84.77 $\pm$ 1.37}\ \ & \textbf{0.0001 $\pm$ 0.0001}\ \ & \textbf{1.95 $\pm$ 0.49} \\
        \bottomrule
    \end{tabular}}
    \label{tab:brains}
\end{table}

All quantitative results are reported as mean ± standard deviation. \Cref{tab:oasis-ixi,tab:brains} present the results on the neuroimaging 3D MRI datasets. TPFM-DIR consistently achieves the highest Dice scores while maintaining near-zero folding ratios ($|J|_{<0}\%$). On average, Dice improves by 2.1\% over the next best method. Visual comparisons on OASIS are shown in \cref{fig:oasis} (top).

\Cref{tab:lungct} reports results on LungCT and AbdomenCT, where TPFM-DIR achieves zero folding on LungCT and reduces TRE by 17\% and 12\% relative to GradICON and CorrMLP, respectively. \Cref{fig:oasis} (bottom) and \cref{fig:abdomen} provide qualitative comparisons for LungCT and AbdomenCT, respectively. Finally, \Cref{tab:acdc} summarizes results on ACDC and CAMUS. On CAMUS dataset TPFM-DIR improves Dice by 3\% while preserving topology with zero folding. Visual comparisons and deformation grids are shown in \cref{fig:camus}. More visual results can be found in Appendix G.

\begin{figure}[!t]
    \centering
    \includegraphics[width=\linewidth]{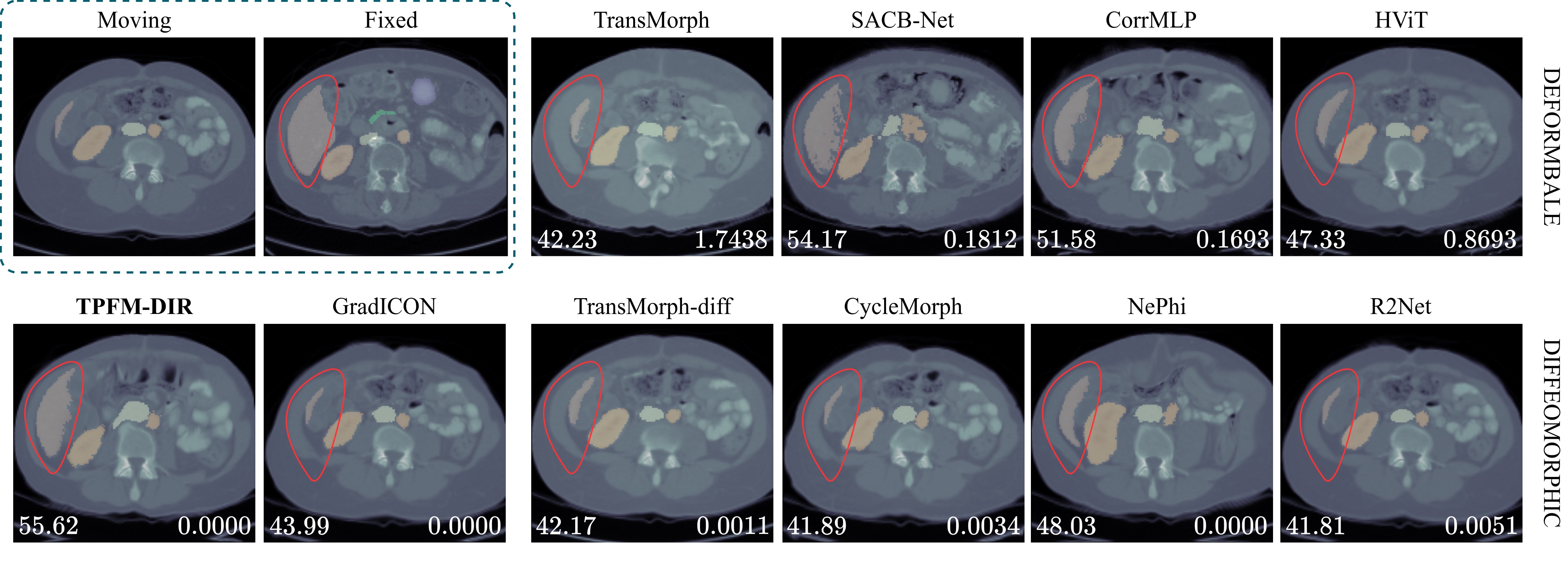}
    \caption{Qualitative comparisons on the AbdomenCT dataset. The segmentation masks are overlaid on all images. Dice score and $|J|_{<0}\%$ of each result are reported at the lower-left and lower-right corners of each sample.}
    \label{fig:abdomen}
\end{figure}

\begin{table}[!t]
    \centering
    \caption{Quantitative comparison on the LungCT and AbdomenCT datasets. The best result per metric is shown in \textbf{bold}. The second-best diffeomorphic and deformable results are highlighted in \textcolor{blue}{blue} and \textcolor{red}{red}, respectively.}
    \resizebox{\linewidth}{!}{
    \begin{tabular}{lccccccc}
        \toprule
        & \multicolumn{3}{c}{\textbf{LungCT}} & \multicolumn{4}{c}{\textbf{AbdomenCT}} \\
        \cmidrule(lr){2-4} \cmidrule(lr){5-8}
        \textbf{Method} 
        & \textbf{TRE} $\downarrow$
        & $|J|_{<0}\% \downarrow$ 
        & \textbf{SSIM} $\uparrow$ 
        & \textbf{Dice} $\uparrow$
        & $|J|_{<0}\% \downarrow$
        & \textbf{HD95} $\downarrow$
        & \textbf{SSIM} $\uparrow$ \\
        \midrule
        LDDMM & 3.09 $\pm$ 0.26 & 0.0033 $\pm$ 0.0015 & 61.14 $\pm$ 3.11 & 37.36 $\pm$ 2.71 & 0.0112 $\pm$ 0.0072 & 12.61 $\pm$ 1.46 & 53.83 $\pm$ 3.17 \\
        \midrule
        CycleMorph & 3.04 $\pm$ 0.19 & 0.7180 $\pm$ 0.2117 & 56.99 $\pm$ 3.44 & 42.56 $\pm$ 2.65 & 0.0053 $\pm$ 0.0025 & 13.04 $\pm$ 1.28 & 52.68 $\pm$ 3.02 \\
        GradICON & \textcolor{blue}{2.64 $\pm$ 0.17} & \textcolor{blue}{0.0009 $\pm$ 0.0004} & 54.06 $\pm$ 2.86 & 44.02 $\pm$ 2.81 & 0.0009 $\pm$ 0.0004 & \textcolor{blue}{12.21 $\pm$ 1.33} & 54.86 $\pm$ 3.27 \\
        TransMorph-diff & 2.89 $\pm$ 0.18 & 0.0042 $\pm$ 0.0015 & 66.85 $\pm$ 2.95 & 41.41 $\pm$ 2.72 & 0.0034 $\pm$ 0.0017 & 12.39 $\pm$ 1.25 & 62.61 $\pm$ 2.98 \\
        R2Net & 2.99 $\pm$ 0.19 & 0.0211 $\pm$ 0.0068 & 68.06 $\pm$ 2.81 & 41.99 $\pm$ 2.89 & 0.0010 $\pm$ 0.0005 & 13.05 $\pm$ 1.32 & 51.52 $\pm$ 3.36 \\
        NODEO & 2.87 $\pm$ 0.18 & 0.0038 $\pm$ 0.0013 & \textcolor{blue}{70.80 $\pm$ 2.74} & 40.37 $\pm$ 2.96 & \textcolor{blue}{0.0007 $\pm$ 0.0004} & 13.16 $\pm$ 1.38 & 52.45 $\pm$ 3.42 \\
        NePhi & 3.12 $\pm$ 0.21 & 0.0093 $\pm$ 0.0027 & 56.66 $\pm$ 3.33 & \textcolor{blue}{45.32 $\pm$ 2.24} & 0.0008 $\pm$ 0.0003 & 12.48 $\pm$ 1.19 & 66.34 $\pm$ 2.54 \\
        \midrule
        TransMorph & 2.85 $\pm$ 0.17 & 0.4767 $\pm$ 0.1546 & 68.82 $\pm$ 2.65 & 46.66 $\pm$ 2.48 & 3.1310 $\pm$ 0.8113 & 12.61 $\pm$ 1.28 & \textbf{71.84 $\pm$ 2.54} \\
        TransMatch & 3.02 $\pm$ 0.19 & 0.1343 $\pm$ 0.0482 & 67.47 $\pm$ 2.89 & 43.64 $\pm$ 2.35 & 2.6378 $\pm$ 0.6922 & 12.50 $\pm$ 1.22 & \textcolor{red}{71.47 $\pm$ 2.33} \\
        DiffuseMorph & 3.38 $\pm$ 0.21 & 0.0794 $\pm$ 0.0195 & 62.98 $\pm$ 2.92 & 39.50 $\pm$ 2.81 & 0.0292 $\pm$ 0.0113 & 12.88 $\pm$ 1.34 & 53.02 $\pm$ 3.09 \\
        DiffuseReg & 3.26 $\pm$ 0.20 & 0.7765 $\pm$ 0.2337 & 65.75 $\pm$ 2.71 & 45.91 $\pm$ 2.56 & 1.8106 $\pm$ 0.7348 & 12.55 $\pm$ 1.25 & 56.87 $\pm$ 2.88 \\
        HViT & 3.22 $\pm$ 0.20 & 0.1791 $\pm$ 0.0554 & 66.72 $\pm$ 3.41 & 47.66 $\pm$ 2.41 & 0.9601 $\pm$ 0.4175 & \textcolor{red}{10.11 $\pm$ 1.10} & 66.90 $\pm$ 2.74 \\
        CorrMLP & \textcolor{red}{2.48 $\pm$ 0.16} & 0.2673 $\pm$ 0.0711 & 65.57 $\pm$ 3.08 & 50.28 $\pm$ 2.18 & 0.1656 $\pm$ 0.0643 & 11.47 $\pm$ 1.16 & 62.66 $\pm$ 2.59 \\
        SACB-Net & 3.01 $\pm$ 0.19 & 0.9824 $\pm$ 0.2761 & \textcolor{red}{70.06 $\pm$ 2.77} & \textcolor{red}{53.38 $\pm$ 2.63} & 0.9348 $\pm$ 0.1135 & 13.09 $\pm$ 1.41 & 66.73 $\pm$ 2.96 \\
        DGIR & 3.24 $\pm$ 0.42 & \textcolor{red}{0.0711 $\pm$ 0.0428} & 67.72 $\pm$ 2.41 & 42.08 $\pm$ 2.23 & \textcolor{red}{0.0277 $\pm$ 0.0114} & 12.24 $\pm$ 1.80 & 64.48 $\pm$ 2.71 \\
        \midrule
        \textbf{TPFM-DIR} & \textbf{2.19 $\pm$ 0.14} & \textbf{0.0 $\pm$ 0.0} & \textbf{71.18 $\pm$ 2.35} & \textbf{54.54 $\pm$ 1.86} & \textbf{0.0002 $\pm$ 0.0001} & \textbf{9.81 $\pm$ 1.13} & \textcolor{blue}{68.94 $\pm$ 1.77} \\
        \bottomrule
    \end{tabular}}
    \label{tab:lungct}
\end{table}

\section{Ablation Studies}
\label{sec:ablation}

We systematically analyze key design components of TPFM-DIR. 
Specifically, we study: (i) the influence of the cocycle regularization through the weight $\lambda$ in \cref{eq:loss} on accuracy and topology preservation, (ii) the effect of progressive deformation compositions at inference, (iii) the architectural flexibility of the proposed framework, and (iv) the computational analysis of TPFM-DIR.

\subsection{Effect of Regularization Factor}
\label{sec:abl-lambda}
The regularization weight $\lambda$ controls the trade-off between alignment accuracy and topology preservation. The value $\lambda=10$ is fixed across all datasets and used in all main experiments since it consistently provided a favorable trade-off between alignment accuracy and topology preservation across datasets. We evaluate the impact of varying $\lambda$ on three representative datasets. \Cref{tab:reg-factor} shows that increasing $\lambda$ progressively eliminates voxels with negative Jacobian determinant, at the cost of a moderate decrease in the performance metric. Conversely, reducing $\lambda$ does not improve alignment accuracy, while significantly increasing folding. In the extreme case of $\lambda=0$, both the performance metric and topology deteriorate substantially. This confirms that cocycle regularization is essential for stabilizing the learned deformation field and promoting physically plausible solutions of the underlying non-autonomous ODE.

\begin{table}[!t]
    \centering
    \caption{Quantitative comparison on the ACDC and CAMUS datasets. The best result per metric is shown in \textbf{bold}. The second-best result is highlighted in \textcolor{red}{red}.}
    \resizebox{\linewidth}{!}{
    \begin{tabular}{lcccccccccc}
        \toprule
        & \multicolumn{5}{c}{\textbf{ACDC}} & \multicolumn{5}{c}{\textbf{CAMUS}} \\
        \cmidrule(lr){2-6} \cmidrule(lr){7-11}
        \textbf{Method} 
        & \textbf{Dice $\uparrow$} 
        & $|J|_{<0}\% \downarrow$ 
        & \textbf{HD95} $\downarrow$
        & \textbf{ASSD} $\downarrow$
        & \textbf{SDLogJ} $\downarrow$
        & \textbf{Dice} $\uparrow$
        & $|J|_{<0}\% \downarrow$ 
        & \textbf{HD95} $\downarrow$
        & \textbf{ASSD} $\downarrow$
        & \textbf{SDLogJ} $\downarrow$ \\
        \midrule
        LDDMM & 78.17 $\pm$ 1.52 & 0.0062 $\pm$ 0.0044 & \textcolor{red}{5.43 $\pm$ 0.63} & \textcolor{red}{2.02 $\pm$ 0.96} & 0.197 $\pm$ 0.131 & 79.30 $\pm$ 5.21 & 0.0019 $\pm$ 0.0013 & 6.25 $\pm$ 1.35 & 2.61 $\pm$ 0.92 & 0.195 $\pm$ 0.031 \\
        \midrule
        AdaCS & \textcolor{red}{85.46 $\pm$ 1.12} & 0.1493 $\pm$ 0.0185 & 5.56 $\pm$ 0.34 & 2.12 $\pm$ 1.20 & 0.326 $\pm$ 0.131 & \textcolor{red}{82.67 $\pm$ 5.14} & 0.0029 $\pm$ 0.0130 & \textcolor{red}{5.23 $\pm$ 1.95} & \textcolor{red}{2.11 $\pm$ 0.76} & 0.193 $\pm$ 0.033 \\
        TransMorph-diff & 82.13 $\pm$ 1.42 & \textcolor{red}{0.0012 $\pm$ 0.0005} & 6.55 $\pm$ 0.43 & 2.67 $\pm$ 1.36 & \textbf{0.162 $\pm$ 0.069} & 78.63 $\pm$ 5.54 & \textcolor{red}{0.0011 $\pm$ 0.0016} & 10.15 $\pm$ 3.16 & 4.04 $\pm$ 1.12 & \textcolor{red}{0.166 $\pm$ 0.028} \\
        TransMorph & 84.47 $\pm$ 1.18 & 0.0418 $\pm$ 0.0087 & 6.24 $\pm$ 0.36 & 2.54 $\pm$ 1.22 & 0.575 $\pm$ 0.094 & 80.53 $\pm$ 5.93 & 0.1017 $\pm$ 0.1376 & 9.10 $\pm$ 3.16 & 3.68 $\pm$ 1.21 & 0.312 $\pm$ 0.075 \\
        DiffuseMorph & 81.90 $\pm$ 1.35 & 0.0383 $\pm$ 0.0091 & 7.34 $\pm$ 0.47 & 2.81 $\pm$ 1.48 & 0.251 $\pm$ 0.092 & 76.43 $\pm$ 6.72 & 0.0932 $\pm$ 0.1329 & 9.77 $\pm$ 3.18 & 4.28 $\pm$ 1.27 & 0.393 $\pm$ 0.087 \\
        DiffuseReg & 83.31 $\pm$ 1.21 & 0.2345 $\pm$ 0.0228 & 6.17 $\pm$ 0.39 & 2.35 $\pm$ 1.31 & 0.398 $\pm$ 0.107 & 77.29 $\pm$ 6.19 & 0.4026 $\pm$ 0.1402 & 10.60 $\pm$ 3.04 & 4.22 $\pm$ 1.23 & 0.395 $\pm$ 0.062 \\
        DGIR & 82.39 $\pm$ 2.97 & 0.0059 $\pm$ 0.0125 & 7.43 $\pm$ 1.08 & 2.26 $\pm$ 0.79 & 0.241 $\pm$ 0.021 & 81.83 $\pm$ 6.27 & 0.0315 $\pm$ 0.0282 & 6.71 $\pm$ 2.02 & 2.78 $\pm$ 0.82 & 0.301 $\pm$ 0.066 \\
        \midrule
        \textbf{TPFM-DIR} & \textbf{87.42 $\pm$ 0.99} & \textbf{0.0002 $\pm$ 0.0001} & \textbf{5.36 $\pm$ 0.35} & \textbf{1.91 $\pm$ 1.07} & \textcolor{red}{0.182 $\pm$ 0.023} & \textbf{85.12 $\pm$ 2.12} & \textbf{0.0 $\pm$ 0.0} & \textbf{4.72 $\pm$ 2.19} & \textbf{2.06 $\pm$ 0.94} & \textbf{0.162 $\pm$ 0.020} \\
        \bottomrule
    \end{tabular}}
    \label{tab:acdc}
\end{table}

\begin{table}[!t]
    \centering
    \caption{Effect of the regularization weight $\lambda$ on performance and topology preservation of TPFM-DIR across OASIS, LungCT, and ACDC.}
    \resizebox{0.5\linewidth}{!}{
    \begin{tabular}{lcccccc}
        \toprule
        & \multicolumn{2}{c}{\textbf{OASIS}} & \multicolumn{2}{c}{\textbf{LungCT}} & \multicolumn{2}{c}{\textbf{ACDC}} \\
        \cmidrule(lr){2-3} \cmidrule(lr){4-5} \cmidrule(lr){6-7}
        $\lambda$
        & \textbf{Dice} $\uparrow$ 
        & $|J|_{<0}\% \downarrow$
        & \textbf{TRE} $\downarrow$ 
        & $|J|_{<0}\% \downarrow$
        & \textbf{Dice} $\uparrow$ 
        & $|J|_{<0}\% \downarrow$\\
        \midrule
        0 & 79.48 & 3.1263 & 4.02 & 2.1821 & 80.19 & 3.0671 \\
        5 & 86.90 & 0.0415 & 3.36 & 0.0019 & 84.52 & 0.0039 \\
        10 & \textbf{87.08} & 0.0019 & \textbf{2.19} & \textbf{0.0} & \textbf{87.42} & 0.0002 \\
        15 & 85.84 & 0.0001 & 2.41 & \textbf{0.0} & 87.08 & 0.0001 \\
        20 & 85.02 & \textbf{0.0} & 3.11 & \textbf{0.0} & 85.74 & \textbf{0.0} \\
        \bottomrule
    \end{tabular}}
    \label{tab:reg-factor}
\end{table}

\subsection{Effect of Inference Compositions}
\label{sec:abl-integrate}

Progressive compositions in TPFM-DIR are performed only at the inference. We observe that as few as four compositions are sufficient to obtain near-zero $|J|_{<0}\%$. \Cref{fig:num-steps} illustrates the effect of increasing the number of steps on Dice and $|J|_{<0}\%$. Although a single-step prediction is feasible, progressive compositions improve smoothness and reduce folding, with only a minor degradation in alignment performance. Beyond four steps, improvements in topology preservation saturate, making this choice a practical trade-off between computational cost and deformation regularity.

\begin{figure}[!ht]
    \centering
    \includegraphics[width=0.9\linewidth]{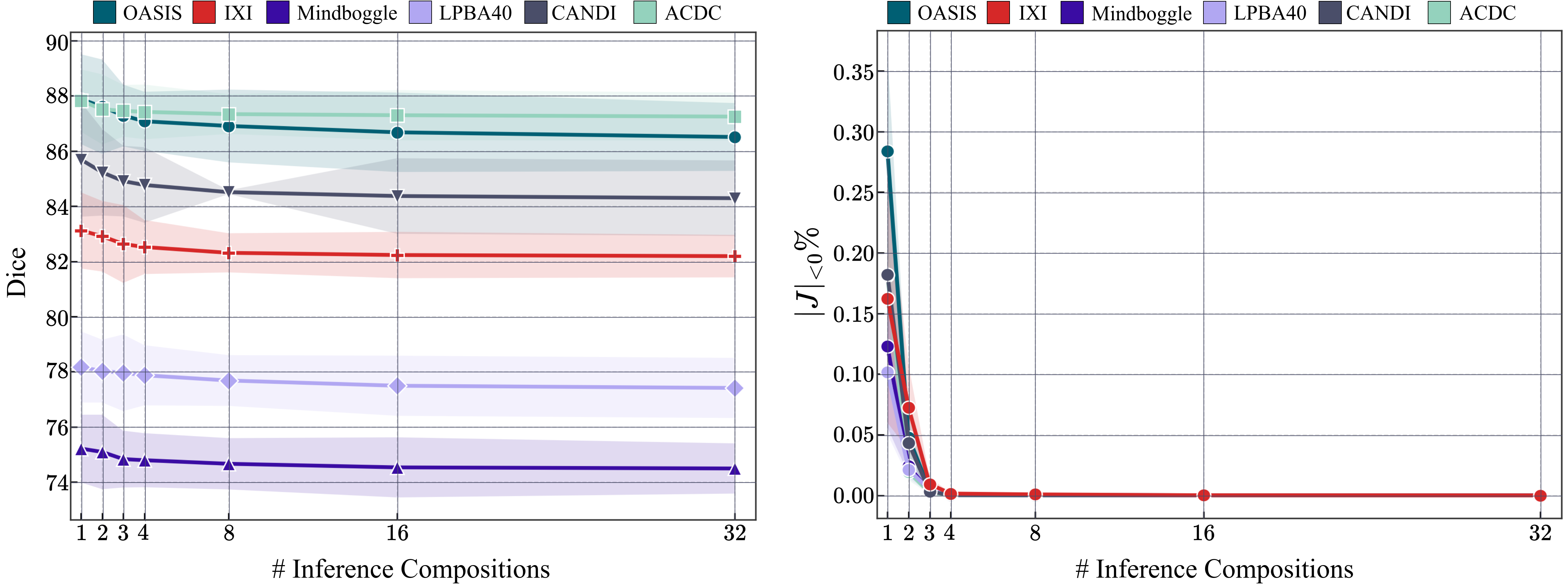}
    \caption{The effect of number of compositions at inference on the Dice score and $|J|_{<0}\%$.}
    \label{fig:num-steps}
\end{figure}


\subsection{The Choice of Architecture}
\label{sec:abl-arch}

\begin{figure}[!t]
    \centering

    \includegraphics[width=\linewidth]{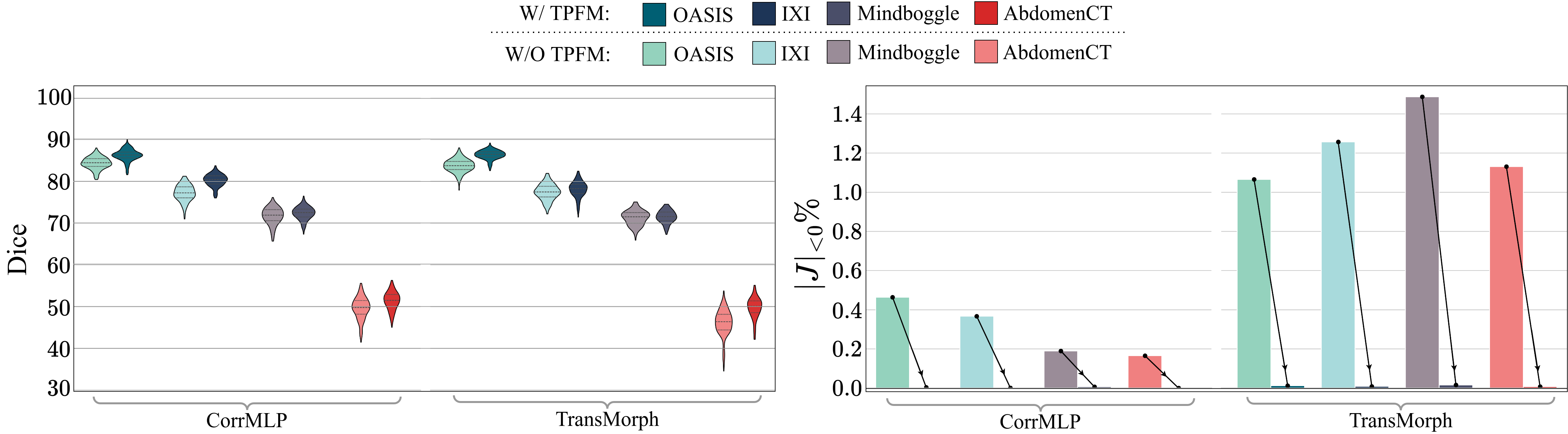}

    \vspace{1em}
    
    \resizebox{\linewidth}{!}{
    \begin{tabular}{lcccccccc}
        \toprule
        & \multicolumn{2}{c}{\textbf{OASIS}} 
        & \multicolumn{2}{c}{\textbf{IXI}} 
        & \multicolumn{2}{c}{\textbf{Mindboggle}} 
        & \multicolumn{2}{c}{\textbf{AbdomenCT}} \\
        
        \cmidrule(lr){2-3}
        \cmidrule(lr){4-5}
        \cmidrule(lr){6-7}
        \cmidrule(lr){8-9}
        
        \textbf{Method} 
        & \textbf{Dice} $\uparrow$ & $|J|_{<0}\% \downarrow$ 
        & \textbf{Dice} $\uparrow$ & $|J|_{<0}\% \downarrow$ 
        & \textbf{Dice} $\uparrow$ & $|J|_{<0}\% \downarrow$ 
        & \textbf{Dice} $\uparrow$ & $|J|_{<0}\% \downarrow$ \\
        
        \midrule
        CorrMLP & 84.66 $\pm$ 1.24 & 0.4640 $\pm$ 0.1776 & 77.54 $\pm$ 1.68 & 0.3675 $\pm$ 0.2168  & 71.92 $\pm$ 1.68 & 0.1895 $\pm$ 0.0782 & 50.28 $\pm$ 2.18 & 0.1656 $\pm$ 0.0643 \\
        CorrMLP-diff & 83.12 $\pm$ 1.73 & 0.0377 $\pm$ 0.0112 & 75.14 $\pm$ 2.08 & 0.0782 $\pm$ 0.0511 & 71.54 $\pm$ 1.88 & 0.0653 $\pm$ 0.0233 & 48.73 $\pm$ 2.98 & 0.0114 $\pm$ 0.0103 \\
        \midrule
        CorrMLP-TPFM & \textbf{86.32 $\pm$ 1.10} & 0.0038 $\pm$ 0.0015 & \textbf{80.38 $\pm$ 1.21} & 0.0016 $\pm$ 0.0012 & \textbf{72.77 $\pm$ 1.33} & 0.0067 $\pm$ 0.0029 & \textbf{51.47 $\pm$ 1.88} & 0.0002 $\pm$ 0.0003 \\
        \midrule
        \midrule
        TransMorph & 84.11 $\pm$ 1.30 & 1.0665 $\pm$ 0.5631 & 77.72 $\pm$ 1.76 & 1.2574 $\pm$ 0.7574 & 71.46 $\pm$ 1.63 & 1.4883 $\pm$ 0.5328 & 46.66 $\pm$ 2.48 & 3.1310 $\pm$ 0.8113 \\
        TransMorph-diff & 83.51 $\pm$ 1.52 & 0.0066 $\pm$ 0.0073 & 75.98 $\pm$ 2.01 & 0.0018 $\pm$ 0.0022 & 69.45 $\pm$ 1.66 & 0.0364 $\pm$ 0.0157 & 41.41 $\pm$ 2.72 & 0.0034 $\pm$ 0.0017 \\
        \midrule
        TransMorph-TPFM & \textbf{86.50 $\pm$ 0.98} & 0.0122 $\pm$ 0.0071 & \textbf{78.19 $\pm$ 1.67} & 0.0092 $\pm$ 0.0043 & \textbf{71.96 $\pm$ 1.45} & 0.0161 $\pm$ 0.0098 & \textbf{50.27 $\pm$ 2.11} & 0.0083 $\pm$ 0.0022 \\
        \bottomrule
    \end{tabular}}    
    \caption{
    Quantitative (table) and visual (figure) analysis of improvements obtained by training CorrMLP and TransMorph in the TPFM-DIR setting. Dice scores increase while $|J|_{<0}\%$ is significantly reduced across OASIS, IXI, Mindboggle, and AbdomenCT.
    }
    \label{fig:before-after-combined}
\end{figure}

TPFM-DIR is a diffeomorphic registration framework rather than a specific network architecture. Although the backbone must support temporal conditioning (e.g., via context addition or FiLM-based modulation \cite{film}), the framework is not restricted to a particular design. To validate architectural flexibility, we adapt two strong deformable baselines, CorrMLP and TransMorph, by modifying only their decoders to incorporate time context via context addition (see Appendix E for more details). \Cref{fig:before-after-combined} reports the results.

For each backbone, we compare three settings: the original deformable model, a conventional diffeomorphic variant trained with scaling and squaring (\textit{-diff}), and the proposed TPFM-DIR formulation. In both architectures, TPFM-DIR improves Dice while drastically reducing folding percentage relative to the original deformable models. Moreover, compared with the scaling-and-squaring variants, TPFM-DIR consistently achieves higher Dice scores while maintaining comparable or better topology preservation. These findings demonstrate that TPFM-DIR is architecture-agnostic and can enhance topology preservation without sacrificing alignment accuracy.

\subsection{Computational Analysis}
\label{sec:abl-comp}

To assess the computational characteristics of TPFM-DIR, we report the number of parameters, training and inference runtime per image pair, and GPU memory consumption in \Cref{tab:comp}. We compare against a strong deformable model (HViT), a scaling-and-squaring-based diffeomorphic method (TransMorph-diff), a learning-based non-autonomous approach (R2Net), and an instance-optimization non-autonomous method (NODEO). Since NODEO is a pair-wise optimization method, its training statistics are reported with ``-''. The performance is measured on OASIS as a representative large 3D dataset. TPFM-DIR achieves the fastest inference time while maintaining a low inference memory footprint. Comparing to models incorporating non-autonomous ODEs, TPFM-DIR exhibits faster training and inference times due to avoiding numerical integration during optimization. All measurements are obtained on an NVIDIA RTX3090 GPU.

\section{Conclusion}
\label{sec:conclusion}

\begin{table}[!t]
    \centering
    \caption{Computational analysis across methods for per-pair training/inference iterations. ``Auto.'' and ``Non-Auto.'' denote autonomous and non-autonomous ODE settings.}
    \resizebox{0.9\linewidth}{!}{
    \begin{tabular}{lcccccc}
        \toprule
        \textbf{Method} & \textbf{\# Params (M)} & \textbf{Train Time (s)} & \textbf{Inference Time (s)} & \textbf{Train Mem. (GB)} & \textbf{Inference Mem. (GB)} & \textbf{ODE Type} \\
        \midrule
        TransMorph-diff & 46.8 & \textbf{0.98} & 0.47 & 12.3 & 4.3 & Auto. \\
        NODEO & \textbf{4.2} & - & 214 & - & 3.5 & Non-Auto. \\
        R2Net & 11.22 & 1.53 & 0.96 & \textbf{11.8} & 3.3 & Non-Auto. \\
        HViT & 21.2 & 1.57 & 0.41 & 24.6 & 5.76 & - \\
        \midrule
        TPFM-DIR & 30.6 & 1.31 & \textbf{0.38} & 18.1 & \textbf{2.7} & Non-Auto. \\
        \bottomrule
    \end{tabular}}
    \label{tab:comp}
\end{table}

We introduced TPFM-DIR, a diffeomorphic image registration framework that directly parameterizes the flow map of a non-autonomous differential equation. By enforcing the anchored cocycle property during training, the method learns temporally consistent diffeomorphic trajectories without requiring numerical integration during optimization. Extensive experiments across nine datasets spanning multiple modalities, anatomical regions, and spatial dimensions demonstrate that TPFM-DIR substantially narrows the long-standing performance gap between diffeomorphic models and strong unconstrained deformable methods. In many settings, it achieves superior registration accuracy while preserving topology. Importantly, TPFM-DIR is architecture-agnostic and can be integrated into existing backbones with minimal modifications, resulting in consistent improvements of deformation regularity and empirical performance. Beyond medical image registration, the proposed formulation provides a principled framework for learning solution operators of non-autonomous ODEs in a data-driven manner.

\clearpage  

\section*{Acknowledgements}
The authors acknowledge funding support from NSERC Discovery Grants and the Department of Computing Science, University of Alberta. The project received compute support from Digital Research Alliance of Canada.

%
%
\bibliographystyle{splncs04}
\bibliography{main}

\clearpage
\appendix
\renewcommand{\theHsection}{\Alph{section}}
\section{Proofs}
\label{app:proof}
The proof sketch of \cref{thm} is as follows: for a sufficiently small $t-s$, we show that $\phi_{s,t}$ in \cref{eq:phi_model} is a local diffeomorphism. Then we construct $\phi_{0,t}$ with progressively applying \cref{eq:partial-coc}, deducing that $\phi_{0,t}$ is a diffeomorphism. Finally, we derive the full cocycle property using \cref{eq:partial-coc} and invertibility of $\phi_{0,t}$, proving that $\phi_{s,t}$ is a two-parameter diffeomorphism solving a non-autonomous ODE of \cref{eq:naode}.

Since our implementation uses all Lipschitz-continuous layers (including convolution and linear layers and SiLU activations), we assume that the Lipschitz constant of the spatial gradient of network $f_\theta$ is bounded by $\sup\|Df_\theta\|_2 \leq L_f$ and also $f_\theta$ is at least $C^1$-differentiable. With this established we provide the proof for \cref{thm}.

\bigskip
\noindent\textbf{\cref{thm}.}
\emph{
Under standard regularity assumptions, a deformation $\phi_{s,t}$ satisfying the identity condition and the anchored cocycle properties of \cref{eq:partial-coc} is a two-parameter diffeomorphism solving a non-autonomous ODE of \cref{eq:naode} on a bounded open domain $\Omega$.
}
\begin{proof}
    By assumption above $f_\theta$ is $C^1$ in $x$ and satisfies 
    \[
    \sup_{x\in\Omega}\|D_x f_\theta(s,t,\cdot)\| \le L_f.
    \]

    where $D_x$ is the spatial derivative operator.

    \paragraph{Step 1: Local diffeomorphism for small time increments.}

    Let $0=t_0 < t_1 < \cdots < t_m = t$ be a partition such that 
    $\delta := t_{i+1}-t_i < 1/L_f$. 
    
    From \cref{eq:phi_model},
    \[
    \phi_{t_i,t_{i+1}}(x)
    = x + (t_{i+1}-t_i) f_\theta(t_i,t_{i+1},x)
    = x + u_i(x),
    \]
    where $u_i(x) := \delta f_\theta(t_i,t_{i+1},x)$. Then
    \[
    D_x \phi_{t_i,t_{i+1}}
    = I + D_x u_i,
    \quad
    \|D_x u_i\| \le \delta L_f < 1.
    \]
    
    Hence $\|D_x\phi - I\|<1$. By the Neumann series argument \cite{neumann}, $D_x\phi_{t_i,t_{i+1}}$ is invertible for all $x$, and therefore, by the Inverse Function Theorem, each $\phi_{t_i,t_{i+1}}$ is a local $C^1$-diffeomorphism.

    \paragraph{Step 2: Global diffeomorphism of $\phi_{0,t}$.}

    From the anchored cocycle condition \cref{eq:partial-coc},
    \[
    \phi_{0,t}
    = \phi_{t_{m-1},t_m}
    \circ \cdots \circ
    \phi_{t_0,t_1}.
    \]
    Since compositions of local diffeomorphisms are local diffeomorphisms and each map is invertible for sufficiently small increments, $\phi_{0,t}$ is a global $C^1$-diffeomorph\-ism.
    
    Moreover, from \cref{eq:partial-coc} with $t=0$,
    \[
    \phi_{s,0} = \phi_{0,s}^{-1}.
    \]

    \paragraph{Step 3: Full cocycle property.}

    Using the first anchored condition,
    \[
    \phi_{s,t}\circ \phi_{0,s} = \phi_{0,t}.
    \]
    Right-multiplying by $\phi_{s,0} = \phi_{0,s}^{-1}$ yields
    \[
    \phi_{s,t}
    = \phi_{0,t} \circ \phi_{s,0}.
    \]
    Therefore, for any $r\leq s\leq t$,
    \[
    \phi_{s,t}\circ \phi_{r,s}
    = (\phi_{0,t}\circ \phi_{s,0})
    \circ
    (\phi_{0,s}\circ \phi_{r,0})
    = \phi_{0,t}\circ \phi_{r,0}
    = \phi_{r,t},
    \]
    which establishes the full cocycle property. Since $\phi_{s,t}$ is a $C^1$ mapping satisfying the identity and cocycle conditions, $\phi_{s,t}$ is a two-parameter flow map solving a non-autonomous ODE \cite{nonauto}.
\end{proof}


\bigskip
\noindent\textbf{\cref{corr}.}
\emph{
For the flow map $\phi_{s,t}$ parameterized as \cref{eq:phi_model}, the instantaneous velocity can be analytically obtained via
    \[
    v_t = f_\theta(t, t; I_m, I_f).
    \]
}
\begin{proof}
    From \cref{eq:phi_model},
    \[
    \phi_{s,t}(x)
    =
    x + (t-s) f_\theta(s,t,x).
    \]
    Differentiating with respect to $t$ yields
    \begin{equation}
        \label{eq:partial-vel}
        \partial_t \phi_{s,t}(x)
        =
        f_\theta(s,t,x)
        +
        (t-s)\,\partial_t f_\theta(s,t,x).
    \end{equation}
    
    \smallskip
    \noindent
    The instantaneous velocity field of the flow is defined as
    \[
    v(x,t)
    =
    \left.\partial_t \phi_{s,t}(x)\right|_{s=t}.
    \]
    Evaluating \cref{eq:partial-vel} at $s=t$ eliminates the second term, giving
    \[
    v(x,t)
    =
    f_\theta(t,t,x).
    \]
\end{proof}

\section{Discussion on Inference Compositions}
\label{app:compos}

\begin{figure}[!t]
    \centering
    \includegraphics[width=0.8\linewidth]{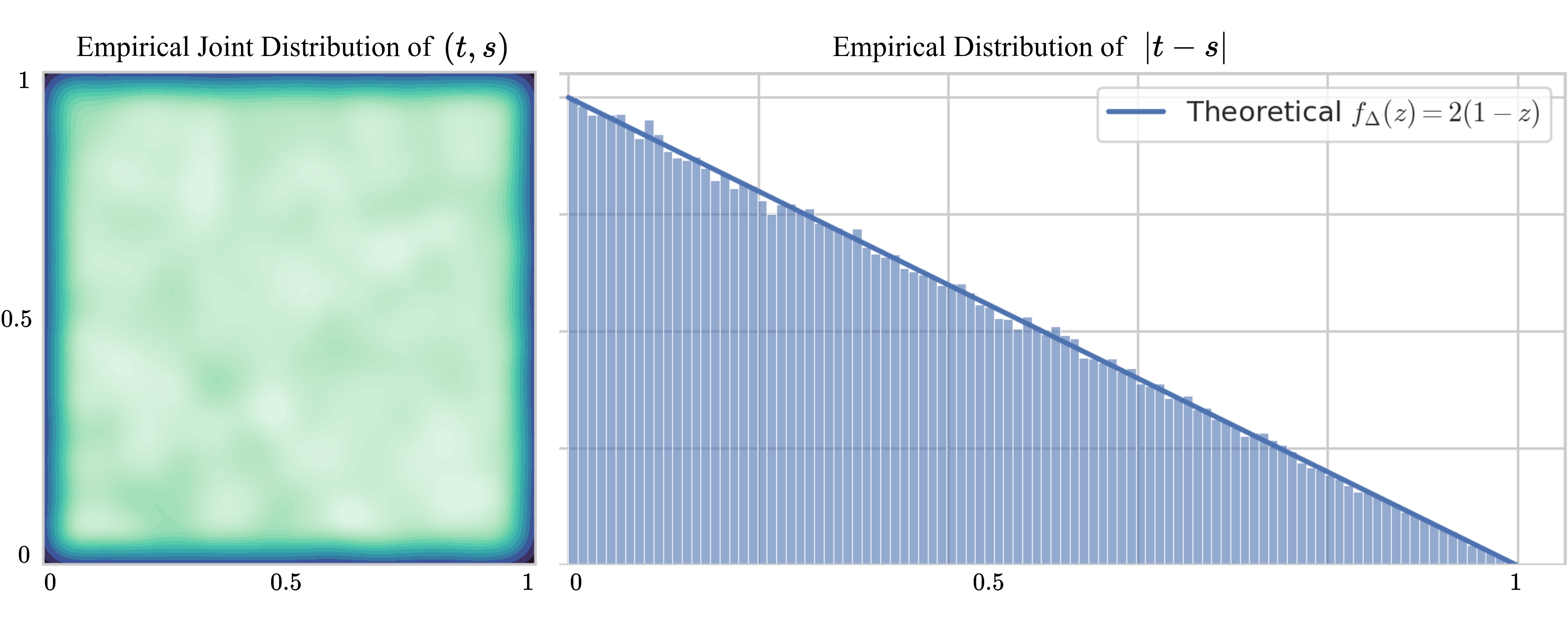}
    \caption{The comparison between the joint distribution of time samples $s,t\sim Uni(0,1)$ and the distribution of time differences $|t-s|$}
    \label{fig:empirical}
\end{figure}

The use of composition at inference is motivated by the bias induced by drawing $s,t \sim Uni(0,1)$. Let $\Delta = |t-s|$. We compute the cumulative distribution function
\[
F_{\Delta}(z)
=
P(|t-s| \le z),
\quad 0 \le z \le 1.
\]
Since $(s,t)$ is uniformly distributed over the unit square, the event $|t-s| \le z$ corresponds to the band between the lines $t = s+z$ and $t = s-z$. A direct geometric calculation yields
\[
F_{\Delta}(z)
=
2z - z^2,
\]
and differentiation gives the probability density function
\[
f_{\Delta}(z)
=
\begin{cases}
2(1 - z), & 0 \le z \le 1, \\
0, & \text{otherwise}.
\end{cases}
\]

The density decreases linearly with $z$, implying that smaller time increments are sampled more frequently than larger ones (\cref{fig:empirical}). Although the network observes the full range $[0,1]$, larger transitions are relatively less represented during optimization. As a result, directly predicting $\phi_{0,1}$ in a single step may be less consistent with the flow structure than composing multiple locally predicted maps.

By partitioning $[0,1]$ into $N$ subintervals and composing the corresponding deformations, inference primarily relies on well-trained small increments. This improves stability and promotes better preservation of diffeomorphic structure. In practice, we find that $N=4$ provides a favorable balance between computational efficiency and deformation accuracy.

\section{Datasets and Preprocessing}
\label{app:data}

\begin{table}[!ht]
    \centering
    \caption{Dataset statistics and data splits used in experiments. For each dataset, the sum of training, validation, and test scans equals the total number of available scans.}
    \resizebox{.8\linewidth}{!}{
    \begin{tabular}{lllcccccc}
        \toprule
        \multirow[b]{2}{*}{\textbf{Dataset}} 
        & \multirow[b]{2}{*}{\textbf{Modality}} 
        & \multirow[b]{2}{*}{\textbf{Organ}}
        & \multicolumn{2}{c}{\textbf{Train}}
        & \multicolumn{2}{c}{\textbf{Val}}
        & \multicolumn{2}{c}{\textbf{Test}} \\
        \cmidrule(lr){4-5} \cmidrule(lr){6-7} \cmidrule(lr){8-9}
        & & 
        & \# Scans & \# Pairs
        & \# Scans & \# Pairs
        & \# Scans & \# Pairs \\
        \midrule
        OASIS & MRI & Brain & 256 & 1000 & 60 & 100 & 100 & 1000 \\
        IXI & MRI & Brain & 450 & 1500 & 50 & 100 & 81 & 300 \\
        Mindboggle101 & MRI & Brain & 70 & 1000 & 10 & 45 & 21 & 210 \\
        LPBA40 & MRI & Brain & 20 & 190 & 5 & 10 & 15 & 105 \\
        CANDI & MRI & Brain & 80 & 400 & 11 & 25 & 26 & 325 \\
        ACDC & MRI & Heart & 90 & 90 & 15 & 15 & 45 & 45 \\
        CAMUS & Ultrasound & Heart & 400 & 400 & 50 & 50 & 50 & 50 \\
        LungCT & CT & Lung & 20 & 20 & 3 & 3 & 9 & 9 \\
        AbdomenCT & CT & Abdomen & 25 & 300 & 5 & 10 & 20 & 190 \\
        \bottomrule
    \end{tabular}}
    \label{tab:datasets}
\end{table}

\begin{enumerate}
    \item \textbf{OASIS \cite{oasis}} The Open Access Series of Imaging Studies (OASIS)\footnote{\url{https://github.com/adalca/medical-datasets/blob/master/neurite-oasis.md}} contains 416 T1-weighted brain MR scans of subjects aged 18–96, including 100 diagnosed with mild to moderate Alzheimer's disease. The segmentation masks of 35 subcortical regions available in the dataset are used for evaluation.
    \item \textbf{IXI\footnote{\url{https://brain-development.org/ixi-dataset}}} The Information eXtraction from Images (IXI) dataset contains 581 brain scans. We use FreeSurfer to obtain skull-stripped volumes and segmentation masks comprising 32 subcortical regions for evaluation.
    \item \textbf{LPBA40 \cite{lpba}} The LONI Probabilistic Brain Atlas (LPBA40)\footnote{\url{https://www.loni.usc.edu/research/atlas_downloads}} contains the brain scans of 40 subjects and segmentation masks of 56 cortical and subcortical regions used for evaluations.
    \item \textbf{Mindboggle101\cite{mindboggle}} The Mindboggle101\footnote{\url{https://mindboggle.info}} contains T1 weighted brain MRI scans of 101 healthy subjects with the segmentation masks of cortical regions used for evaluations.
    \item \textbf{CANDI\cite{candi}} The Child and Adolescent NeuroDevelopment Initiative (CAN\-DI) \footnote{\url{https://www.nitrc.org/projects/candi_share/}} dataset contains T1 weighted brain scans of 117 subjects divided into four distinct subgroups of Healthy Control, Schizophrenia Spectrum, Bipolar Disorder with Psychosis, and Bipolar Disorder without Psychosis. The segmentation masks of 32 subcortical regions are available in the dataset and used for evaluations.
    \item \textbf{LungCT\footnote{\label{l2r}\url{https://learn2reg.grand-challenge.org/Datasets/}}} The LungCT dataset, obtained from the Learn2Reg Challenge, contains thoracic CT scans acquired at different respiratory phases. The dataset comes with 300 keypoints per image annotated by experts. The keypoints are used for evaluation and computing the target registration error (TRE) metric.
    \item \textbf{AbdomenCT\footref{l2r}} The AbdomenCT dataset, also from the Learn2Reg Challenge, contains abdominal CT scans of 50 subjects with multi-organ annotations for evaluation.
    \item \textbf{ACDC \cite{acdc}} The Automated Cardiac Diagnosis Challenge (ACDC)\footnote{\url{https://www.creatis.insa-lyon.fr/Challenge/acdc/}} dataset is a 2D cardiac MRI dataset containing short-axis cine MR scans of 150 patients.
    \item \textbf{CAMUS \cite{camus}} The Cardiac Acquisitions for Multi-structured Ultrasound Segmentation (CAMUS)\footnote{\url{https://www.creatis.insa-lyon.fr/Challenge/camus/}} is a 2D cardiac ultrasound dataset comprising 500 patients. The segmentation masks of 2 chamber and 4 chamber structures are available in the dataset. We use the 2-chamber (2CH) segmentation masks for evaluations in accordance with \cite{adacs}.
\end{enumerate}

\Cref{tab:datasets} summarizes the datasets used in our evaluations, the number of scans and pairs used in training, validation, and test. For 3D brain MRI datasets and AbdomenCT, moving–fixed pairs are randomly sampled from distinct subjects within each split, without cross-split overlap. Pair sampling is performed independently for training, validation, and test sets according to the numbers reported in \Cref{tab:datasets}. For LungCT dataset, registration is performed between the end-inhalation and end-exhalation scans of the same patient. For ACDC MRI and CAMUS Ultrasound datasets, registration is performed between the end-systolic (ES) and end-diastolic (ED) phases for each patient. For the ACDC dataset we randomly chose the patients for training, validation, and test sets. For CAMUS, we adopt the official patient-level splits provided with the dataset.

\textbf{Preprocessing.} We make sure all brain scans are in MNI152 space and center-cropped and resampled to a common spatial resolution of $160\times144\times192$. For LungCT and AbdomenCT, intensities are clipped to the Hounsfield Unit range $[-1000, 1000]$ to focus on tissue contrast. ACDC scans are center-cropped to $128\times128$ while CAMUS scans are \textit{resized} to $128\times128$. As a final step, all image intensities are min-max normalized to $[0, 1]$.

\section{Implementation Details}
\label{app:impl}
We provide the implementation details of TPFM-DIR. As shown in \cref{fig:arch} (left panel), TPFM-DIR adopts a time-embedded UNet architecture. The encoder layers dimensions are $[2, 16, 32, 64, 128, 512]$, where the input channel size 2 corresponds to the concatenated moving and fixed images.  The decoder layers dimensions are $[512, 128, 64, 32, 16, 3]$. For the 2D setting the last dimension of the decoder is $2$ accounting for 2D deformations. We use skip connections between corresponding encoder and decoder stages, equipped with gated attention modules. The skip features are concatenated with the decoder feature maps. We use SiLU\cite{silu} for all nonlinearities in the network due to its smoothness properties.

To incorporate the time context, each time sample $s$ and $t$ is initially encoded via 256 dimensional sinusoidal positional encoding followed by separate linear layers. The final embeddings of $s$ and $t$ are then summed and passed to each decoder layer. The time context is incorporated into each decoder layer via FiLM modulation \cite{film}. More precisely, if $h_i$ is the feature map of the $i$-th decoder layer, the time context $c$ is mixed with $h_i$ through:

\[
\hat{h}_i = (1 + \gamma(c)) h_i + \zeta(c),
\]

where $\gamma(.)$ and $\zeta(.)$ are learnable linear layers. Other methods are directly taken from their official repositories. The implementation of the LDDMM was taken from the widely used PyTorch package \footnote{\url{https://github.com/brianlee324/torch-lddmm}}.

\section{TPFM-DIR Backbone Architectures}
\label{app:backbones}

\begin{figure}
    \centering
    \includegraphics[width=\linewidth]{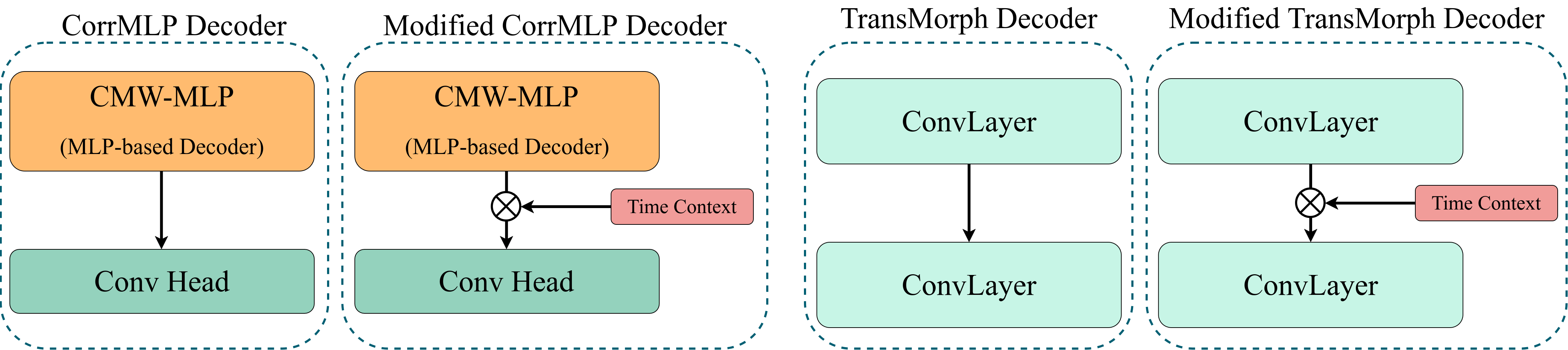}
    \caption{Illustration of the modifications applied to the CorrMLP and TransMorph decoders to adapt them to the TPFM-DIR setting. Time context is injected into the decoder layers via FiLM modulation \cite{film}.}
    \label{fig:backbones}
\end{figure}

\Cref{sec:abl-arch} presents the results of incorporating different backbones into the TPFM-DIR setting. We adapt the CorrMLP \cite{corrmlp} and TransMorph\cite{transmorph} decoders to incorporate time dependence and evaluate the resulting models within the TPFM-DIR framework. We detail the architectural modifications applied to the CorrMLP and TransMorph decoders to adapt them to the TPFM-DIR setting. \Cref{fig:backbones} shows how the time context is injected into the decoder layers to make the decoders time-dependent and suitable for TPFM-DIR setting. The time context is computed as described in Appendix~\ref{app:impl} and injected into the decoder layers via FiLM modulation. We do not modify any other architectural components of the original models, including encoder and decoder channel dimensions and activation functions, ensuring that the changes remain minimal. The additional FiLM layers introduce only lightweight linear projections for time conditioning and do not alter the depth or structure of the original decoders. The reported results in \cref{fig:before-after-combined} are obtained by training the modified CorrMLP and TransMorph from scratch within the TPFM-DIR setting.

\section{Complete Quantitative Results}
\label{app:full-res}

\Cref{tab:oasis_ixi_appendix} presents the full-metric results for OASIS, IXI, and LPBA40 datasets. \Cref{tab:candi_mindboggle_appendix} presents the full-metric results for CANDI, Mindboggle101, and AbdomenCT datasets. \Cref{tab:lungct_only} reports the results on LungCT dataset with full set of metrics, and \cref{tab:acdc_camus_appendix} shows the results with complete metrics over the 2D cardiac datasets.

\begin{table*}[!t]
    \centering
    \caption{Quantitative comparison on the OASIS, IXI, and LPBA40 datasets. 
    The best result per metric is shown in \textbf{bold}. 
    The second-best diffeomorphic and deformable results are highlighted in 
    \textcolor{blue}{blue} and \textcolor{red}{red}, respectively.}
    \resizebox{0.85\linewidth}{!}{
    \begin{tabular}{lcccccc}
        \toprule
        \multicolumn{7}{c}{\textbf{OASIS}} \\
        \midrule
        \textbf{Method} 
        & \textbf{Dice $\uparrow$} 
        & $|J|_{<0}\% \downarrow$ 
        & \textbf{HD95} $\downarrow$ 
        & \textbf{ASSD} $\downarrow$ 
        & \textbf{SSIM} $\uparrow$
        & \textbf{SDLogJ} $\downarrow$ \\
        \midrule
        LDDMM & 76.59 $\pm$ 2.42 & 0.0064 $\pm$ 0.0051 & 3.89 $\pm$ 0.93 & 1.18 $\pm$ 0.36 & 80.24 $\pm$ 1.84 & 0.012 $\pm$ 0.009 \\
        \midrule
        CycleMorph & 81.93 $\pm$ 2.14 & 0.0211 $\pm$ 0.0091 & 2.36 $\pm$ 0.81 & \textcolor{blue}{0.75 $\pm$ 0.26} & 88.31 $\pm$ 1.43 & 0.040 $\pm$ 0.026 \\
        GradICON & \textcolor{blue}{83.74 $\pm$ 1.42} & 0.0039 $\pm$ 0.0012 & \textcolor{blue}{2.09 $\pm$ 0.36} & 0.78 $\pm$ 0.36 & 90.17 $\pm$ 1.19 & 0.011 $\pm$ 0.006 \\
        TransMorph-diff & 83.51 $\pm$ 1.52 & 0.0066 $\pm$ 0.0073 & 2.35 $\pm$ 0.76 & 0.78 $\pm$ 0.27 & \textcolor{blue}{91.25 $\pm$ 2.07} & 0.071 $\pm$ 0.041 \\
        R2Net & 80.11 $\pm$ 1.53 & 0.0093 $\pm$ 0.0054 & 2.15 $\pm$ 0.49 & 1.02 $\pm$ 1.02 & 87.44 $\pm$ 1.15 & 0.081 $\pm$ 0.055 \\
        NODEO & 81.97 $\pm$ 1.39 & \textcolor{blue}{0.0024 $\pm$ 0.0010} & 2.18 $\pm$ 0.71 & 0.83 $\pm$ 0.24 & 87.68 $\pm$ 1.93 & \textcolor{blue}{0.008 $\pm$ 0.005} \\
        NePhi & 80.09 $\pm$ 1.17 & 0.0056 $\pm$ 0.0023 & 3.28 $\pm$ 0.44 & 0.90 $\pm$ 0.32 & 86.15 $\pm$ 1.90 & 0.022 $\pm$ 0.016 \\
        PULPo & 80.51 $\pm$ 0.99 & 0.0527 $\pm$ 0.0240 & 3.45 $\pm$ 0.98 & 0.85 $\pm$ 0.34 & 86.71 $\pm$ 1.86 & 0.077 $\pm$ 0.031 \\
        \midrule
        TransMorph & 84.11 $\pm$ 1.30 & 1.0665 $\pm$ 0.5631 & 2.26 $\pm$ 0.68 & 0.80 $\pm$ 0.26 & 91.14 $\pm$ 1.49 & 0.821 $\pm$ 0.252 \\
        TransMatch & 83.48 $\pm$ 1.80 & 0.1493 $\pm$ 0.0151 & 2.23 $\pm$ 0.66 & \textcolor{red}{0.75 $\pm$ 0.21} & 90.26 $\pm$ 2.11 & 0.741 $\pm$ 0.344 \\
        DiffuseMorph & 79.13 $\pm$ 2.03 & 1.1563 $\pm$ 0.2512 & 3.51 $\pm$ 0.77 & 0.85 $\pm$ 0.29 & 87.79 $\pm$ 1.91 & 0.336 $\pm$ 0.189 \\
        DiffuseReg & 80.12 $\pm$ 1.22 & 1.0877 $\pm$ 0.4312 & 3.73 $\pm$ 1.01 & 0.90 $\pm$ 0.33 & 87.94 $\pm$ 2.15 & 0.377 $\pm$ 0.130 \\
        HViT & \textcolor{red}{85.07 $\pm$ 1.05} & 0.4812 $\pm$ 0.0114 & \textcolor{red}{1.92 $\pm$ 0.46} & 0.83 $\pm$ 0.31 & \textcolor{red}{92.13 $\pm$ 1.39} & 0.631 $\pm$ 0.412 \\
        CorrMLP & 84.66 $\pm$ 1.24 & 0.4640 $\pm$ 0.1776 & 2.23 $\pm$ 0.62 & 0.76 $\pm$ 0.22 & 90.29 $\pm$ 1.84 & 0.608 $\pm$ 0.278 \\
        SACB-Net & 83.17 $\pm$ 1.18 & \textcolor{red}{0.0184 $\pm$ 0.0093} & 2.33 $\pm$ 0.48 & 0.80 $\pm$ 0.21 & 88.66 $\pm$ 2.13 & \textcolor{red}{0.182 $\pm$ 0.087} \\
        DGIR & 82.37 $\pm$ 1.33 & 0.0325 $\pm$ 0.0076 & 3.29 $\pm$ 0.91 & 0.85 $\pm$ 0.22 & 91.66 $\pm$ 1.79 & 0.392 $\pm$ 0.182 \\
        \midrule
        \textbf{TPFM-DIR} & \textbf{87.08 $\pm$ 1.07} & \textbf{0.0019 $\pm$ 0.0007} & \textbf{1.78 $\pm$ 0.44} & \textbf{0.75 $\pm$ 0.12} & \textbf{93.34 $\pm$ 1.83} & \textbf{0.006 $\pm$ 0.002} \\
        \midrule
        \midrule
        \multicolumn{7}{c}{\textbf{IXI}} \\
        \midrule
        \textbf{Method} 
        & \textbf{Dice $\uparrow$} 
        & $|J|_{<0}\% \downarrow$ 
        & \textbf{HD95} $\downarrow$ 
        & \textbf{ASSD} $\downarrow$ 
        & \textbf{SSIM} $\uparrow$
        & \textbf{SDLogJ} $\downarrow$ \\
        \midrule
        LDDMM & 67.90 $\pm$ 1.15 & 0.0034 $\pm$ 0.0021 & 4.11 $\pm$ 1.02 & 1.17 $\pm$ 0.55 & 81.1 $\pm$ 1.75 & 0.058 $\pm$ 0.020 \\
        \midrule
        CycleMorph & 74.67 $\pm$ 1.33 & 0.0224 $\pm$ 0.0110 & 3.34 $\pm$ 0.87 & 0.87 $\pm$ 0.38 & 84.75 $\pm$ 1.88 & 0.068 $\pm$ 0.032 \\
        GradICON & \textcolor{blue}{76.43 $\pm$ 1.28} & \textcolor{blue}{0.0018 $\pm$ 0.0022} & \textcolor{blue}{3.19 $\pm$ 0.55} & 0.87 $\pm$ 0.30 & 85.70 $\pm$ 1.73 & \textcolor{blue}{0.015 $\pm$ 0.006} \\
        TransMorph-diff & 75.98 $\pm$ 2.01 & 0.0142 $\pm$ 0.0098 & 3.81 $\pm$ 0.89 & 0.94 $\pm$ 0.45 & \textcolor{blue}{87.75 $\pm$ 1.92} & 0.102 $\pm$ 0.048 \\
        R2Net & 73.39 $\pm$ 1.77 & 0.0031 $\pm$ 0.0016 & 3.69 $\pm$ 0.66 & 1.20 $\pm$ 0.51 & 83.92 $\pm$ 1.97 & 0.042 $\pm$ 0.026 \\
        NODEO & 75.68 $\pm$ 1.82 & 0.0031 $\pm$ 0.0015 & 3.46 $\pm$ 0.84 & 0.94 $\pm$ 0.42 & 86.46 $\pm$ 1.38 & 0.017 $\pm$ 0.008 \\
        NePhi & 75.34 $\pm$ 1.78 & 0.0029 $\pm$ 0.0016 & 3.55 $\pm$ 0.55 & 0.93 $\pm$ 0.34 & 85.05 $\pm$ 1.83 & 0.018 $\pm$ 0.011 \\
        PULPo & 73.91 $\pm$ 2.11 & 0.0066 $\pm$ 0.0043 & 3.30 $\pm$ 0.39 & 0.90 $\pm$ 0.41 & 84.35 $\pm$ 2.18 & 0.028 $\pm$ 0.016 \\
        \midrule
        TransMorph & 77.72 $\pm$ 1.76 & 1.2574 $\pm$ 0.7574 & 3.71 $\pm$ 0.80 & 0.88 $\pm$ 0.32 & 89.04 $\pm$ 1.76 & 1.083 $\pm$ 0.742 \\
        TransMatch & 76.80 $\pm$ 1.85 & 0.0406 $\pm$ 0.0120 & 3.15 $\pm$ 0.41 & 0.84 $\pm$ 0.34 & 87.58 $\pm$ 1.74 & 0.580 $\pm$ 0.411 \\
        DiffuseMorph & 71.35 $\pm$ 2.05 & 1.6042 $\pm$ 0.7212 & 5.48 $\pm$ 1.45 & 1.45 $\pm$ 0.47 & 83.19 $\pm$ 2.76 & 0.938 $\pm$ 0.508 \\
        DiffuseReg & 72.13 $\pm$ 1.88 & 0.5279 $\pm$ 0.2834 & 5.04 $\pm$ 1.12 & 1.28 $\pm$ 0.43 & 84.13 $\pm$ 2.45 & 0.488 $\pm$ 0.276 \\
        HViT & \textcolor{red}{80.67 $\pm$ 1.67} & 0.5933 $\pm$ 0.1028 & \textcolor{red}{2.98 $\pm$ 0.46} & \textbf{0.78 $\pm$ 0.28} & \textcolor{red}{89.48 $\pm$ 1.92} & 0.679 $\pm$ 0.290 \\
        CorrMLP & 77.54 $\pm$ 1.68 & 0.3675 $\pm$ 0.2168 & 3.15 $\pm$ 0.39 & \textcolor{red}{0.81 $\pm$ 0.31} & 86.50 $\pm$ 1.64 & 0.547 $\pm$ 0.231 \\
        SACB-Net & 78.15 $\pm$ 1.87 & \textcolor{red}{0.0201 $\pm$ 0.0099} & 3.19 $\pm$ 0.51 & 0.84 $\pm$ 0.24 & 87.61 $\pm$ 1.89 & \textcolor{red}{0.144 $\pm$ 0.056} \\
        DGIR & 75.34 $\pm$ 1.55 & 0.0766 $\pm$ 0.0101 & 3.22 $\pm$ 0.72 & 0.89 $\pm$ 0.22 & 84.93 $\pm$ 1.91 & 0.193 $\pm$ 0.101 \\
        \midrule
        \textbf{TPFM-DIR} & \textbf{82.52 $\pm$ 0.98} & \textbf{0.0015 $\pm$ 0.0006} & \textbf{2.95 $\pm$ 0.68} & \textcolor{blue}{0.79 $\pm$ 0.17} & \textbf{90.88 $\pm$ 1.03} & \textbf{0.007 $\pm$ 0.003} \\
        \midrule
        \midrule
        \multicolumn{7}{c}{\textbf{LPBA40}} \\
        \midrule
        \textbf{Method} 
        & \textbf{Dice $\uparrow$} 
        & $|J|_{<0}\% \downarrow$ 
        & \textbf{HD95} $\downarrow$ 
        & \textbf{ASSD} $\downarrow$ 
        & \textbf{SSIM} $\uparrow$
        & \textbf{SDLogJ} $\downarrow$ \\
        \midrule
        LDDMM & 69.23 $\pm$ 1.78 & 0.0011 $\pm$ 0.0004 & 7.38 $\pm$ 1.11 & 1.86 $\pm$ 1.03 & 84.87 $\pm$ 2.44 & 0.027 $\pm$ 0.011 \\
        \midrule
        CycleMorph & 71.90 $\pm$ 1.41 & 0.0064 $\pm$ 0.0032 & 6.76 $\pm$ 0.94 & 1.28 $\pm$ 0.86 & 87.43 $\pm$ 2.18 & 0.042 $\pm$ 0.026 \\
        GradICON & 74.21 $\pm$ 1.26 & 0.0011 $\pm$ 0.0006 & 6.89 $\pm$ 0.89 & \textcolor{blue}{1.02 $\pm$ 0.76} & 91.32 $\pm$ 1.59 & \textcolor{blue}{0.022 $\pm$ 0.016} \\
        TransMorph-diff & 70.98 $\pm$ 1.52 & 0.0086 $\pm$ 0.0044 & 7.22 $\pm$ 1.03 & 1.18 $\pm$ 0.82 & 94.02 $\pm$ 1.45 & 0.041 $\pm$ 0.021 \\
        R2Net & 70.71 $\pm$ 1.49 & 0.0017 $\pm$ 0.0009 & 7.13 $\pm$ 1.01 & 1.88 $\pm$ 1.10 & 80.40 $\pm$ 2.66 & 0.029 $\pm$ 0.017 \\
        NODEO & 72.73 $\pm$ 1.25 & \textcolor{blue}{0.0008 $\pm$ 0.0004} & 7.25 $\pm$ 0.97 & 1.43 $\pm$ 0.53 & 91.57 $\pm$ 1.66 & 0.038 $\pm$ 0.023 \\
        NePhi & 72.39 $\pm$ 1.33 & 0.0024 $\pm$ 0.0016 & 6.75 $\pm$ 0.91 & 1.24 $\pm$ 0.55 & 85.28 $\pm$ 1.80 & 0.025 $\pm$ 0.017 \\
        PULPo & \textcolor{blue}{74.82 $\pm$ 1.11} & 0.0009 $\pm$ 0.0005 & \textcolor{blue}{6.22 $\pm$ 0.78} & 1.09 $\pm$ 0.43 & 86.35 $\pm$ 1.44 & 0.031 $\pm$ 0.019 \\
        \midrule
        TransMorph & 74.42 $\pm$ 1.33 & 0.5275 $\pm$ 0.1835 & 6.88 $\pm$ 0.85 & 1.78 $\pm$ 0.68 & \textbf{94.57 $\pm$ 1.86} & 0.217 $\pm$ 0.180 \\
        TransMatch & 74.13 $\pm$ 1.31 & 0.0251 $\pm$ 0.0116 & \textcolor{red}{6.53 $\pm$ 0.83} & 1.68 $\pm$ 0.93 & 92.47 $\pm$ 2.07 & 0.126 $\pm$ 0.091 \\
        DiffuseMorph & 73.11 $\pm$ 1.42 & 0.3610 $\pm$ 0.1429 & 7.21 $\pm$ 1.06 & 2.08 $\pm$ 1.12 & 84.50 $\pm$ 2.33 & 0.521 $\pm$ 0.374 \\
        DiffuseReg & 72.98 $\pm$ 1.46 & 0.0723 $\pm$ 0.0289 & 7.27 $\pm$ 1.04 & 2.08 $\pm$ 0.93 & 85.76 $\pm$ 2.11 & 0.370 $\pm$ 0.245 \\
        HViT & \textcolor{red}{76.57 $\pm$ 1.18} & 0.4428 $\pm$ 0.1655 & 6.74 $\pm$ 0.89 & 1.71 $\pm$ 0.54 & \textcolor{red}{94.10 $\pm$ 2.12} & 0.397 $\pm$ 0.122 \\
        CorrMLP & 75.72 $\pm$ 1.27 & 0.0145 $\pm$ 0.0068 & 6.55 $\pm$ 0.87 & 1.73 $\pm$ 0.78 & 90.39 $\pm$ 1.34 & 0.197 $\pm$ 0.068 \\
        SACB-Net & 73.91 $\pm$ 1.35 & \textcolor{red}{0.0099 $\pm$ 0.0041} & 6.70 $\pm$ 0.93 & \textcolor{red}{1.47 $\pm$ 0.38} & 93.49 $\pm$ 1.66 & \textcolor{red}{0.121 $\pm$ 0.092} \\
        DGIR & 71.99 $\pm$ 1.44 & 0.0189 $\pm$ 0.0073 & 7.01 $\pm$ 0.89 & 1.88 $\pm$ 1.12 & 88.98 $\pm$ 1.79 & 0.181 $\pm$ 0.160 \\
        \midrule
        \textbf{TPFM-DIR} & \textbf{77.87 $\pm$ 1.09} & \textbf{0.0005 $\pm$ 0.0002} & \textbf{5.85 $\pm$ 1.63} & \textbf{0.98 $\pm$ 0.15} & \textcolor{blue}{94.24 $\pm$ 1.82} & \textbf{0.021 $\pm$ 0.012} \\
        \bottomrule
    \end{tabular}}
    \label{tab:oasis_ixi_appendix}
\end{table*}

\begin{table*}[!t]
    \centering
    \caption{Quantitative comparison on the Mindboggle101, CANDI, and AbdomenCT datasets. 
    The best result per metric is shown in \textbf{bold}. 
    The second-best diffeomorphic and deformable results are highlighted in 
    \textcolor{blue}{blue} and \textcolor{red}{red}, respectively.}
    \resizebox{0.85\linewidth}{!}{
    \begin{tabular}{lcccccc}
        \toprule
        \multicolumn{7}{c}{\textbf{Mindboggle101}} \\
        \midrule
        \textbf{Method} 
        & \textbf{Dice $\uparrow$} 
        & $|J|_{<0}\% \downarrow$ 
        & \textbf{HD95} $\downarrow$ 
        & \textbf{ASSD} $\downarrow$ 
        & \textbf{SSIM} $\uparrow$
        & \textbf{SDLogJ} $\downarrow$ \\
        \midrule
        LDDMM & 64.57 $\pm$ 2.08 & 0.0061 $\pm$ 0.0039 & 7.79 $\pm$ 0.84 & 1.97 $\pm$ 0.78 & 81.48 $\pm$ 2.42 & 0.028 $\pm$ 0.019 \\
        \midrule
        CycleMorph & 70.64 $\pm$ 1.48 & 0.1282 $\pm$ 0.0873 & 7.75 $\pm$ 0.98 & 1.56 $\pm$ 1.02 & 86.83 $\pm$ 2.44 & 0.145 $\pm$ 0.033 \\
        GradICON & \textcolor{blue}{70.78 $\pm$ 1.52} & 0.0117 $\pm$ 0.0043 & 7.59 $\pm$ 1.12 & 1.57 $\pm$ 0.72 & 87.51 $\pm$ 1.76 & \textcolor{blue}{0.034 $\pm$ 0.023} \\
        TransMorph-diff & 69.45 $\pm$ 1.66 & 0.0364 $\pm$ 0.0157 & 7.89 $\pm$ 1.05 & \textcolor{blue}{1.47 $\pm$ 0.83} & \textcolor{blue}{92.76 $\pm$ 1.49} & 0.081 $\pm$ 0.048 \\
        R2Net & 68.34 $\pm$ 1.69 & 0.0086 $\pm$ 0.0031 & 8.97 $\pm$ 1.44 & 1.84 $\pm$ 1.03 & 82.61 $\pm$ 2.18 & 0.087 $\pm$ 0.048 \\
        NODEO & 70.18 $\pm$ 1.58 & 0.0023 $\pm$ 0.0011 & 7.70 $\pm$ 0.96 & 1.66 $\pm$ 0.67 & 86.72 $\pm$ 1.83 & 0.058 $\pm$ 0.038 \\
        NePhi & 67.97 $\pm$ 1.91 & \textcolor{blue}{0.0019 $\pm$ 0.0005} & 8.20 $\pm$ 1.23 & 1.74 $\pm$ 0.84 & 84.47 $\pm$ 1.77 & 0.047 $\pm$ 0.025 \\
        PULPo & 68.69 $\pm$ 1.74 & 0.0287 $\pm$ 0.0104 & 7.92 $\pm$ 1.07 & 1.61 $\pm$ 1.06 & 84.23 $\pm$ 2.24 & 0.031 $\pm$ 0.027 \\
        \midrule
        TransMorph & 71.46 $\pm$ 1.63 & 1.4883 $\pm$ 0.5328 & \textbf{7.27 $\pm$ 0.92} & 1.45 $\pm$ 0.72 & \textbf{93.70 $\pm$ 2.08} & 0.337 $\pm$ 0.196 \\
        TransMatch & 71.78 $\pm$ 1.49 & 0.2483 $\pm$ 0.1020 & 7.63 $\pm$ 1.08 & 1.53 $\pm$ 1.14 & 89.73 $\pm$ 2.13 & 0.561 $\pm$ 0.399 \\
        DiffuseMorph & 68.22 $\pm$ 1.77 & 1.7413 $\pm$ 0.6521 & 8.43 $\pm$ 1.30 & 1.65 $\pm$ 0.75 & 84.62 $\pm$ 1.76 & 0.535 $\pm$ 0.382 \\
        DiffuseReg & 71.88 $\pm$ 1.54 & 1.0823 $\pm$ 0.4882 & 7.84 $\pm$ 1.03 & 1.56 $\pm$ 1.14 & 90.20 $\pm$ 2.54 & 0.224 $\pm$ 0.188 \\
        HViT & 71.80 $\pm$ 1.72 & 1.4051 $\pm$ 0.6153 & 7.56 $\pm$ 1.11 & \textcolor{red}{1.51 $\pm$ 0.58} & \textcolor{red}{92.04 $\pm$ 1.49} & 0.352 $\pm$ 0.287 \\
        CorrMLP & 71.92 $\pm$ 1.68 & 0.1895 $\pm$ 0.0782 & 7.65 $\pm$ 1.06 & 1.51 $\pm$ 0.67 & 89.48 $\pm$ 1.93 & 0.331 $\pm$ 0.214 \\
        SACB-Net & \textcolor{red}{72.75 $\pm$ 1.57} & 0.3542 $\pm$ 0.1433 & \textcolor{red}{7.46 $\pm$ 0.97} & 1.52 $\pm$ 0.88 & 91.83 $\pm$ 2.11 & 0.214 $\pm$ 0.197 \\
        DGIR & 68.12 $\pm$ 1.93 & \textcolor{red}{0.0674 $\pm$ 0.0213} & 7.94 $\pm$ 1.08 & 1.77 $\pm$ 1.23 & 86.17 $\pm$ 2.13 & \textcolor{red}{0.093 $\pm$ 0.077} \\
        \midrule
        \textbf{TPFM-DIR} & \textbf{74.79 $\pm$ 0.98} & \textbf{0.0012 $\pm$ 0.0017} & \textcolor{blue}{7.53 $\pm$ 1.86} & \textbf{1.34 $\pm$ 0.63} & 91.87 $\pm$ 1.93 & \textbf{0.022 $\pm$ 0.013} \\
        \midrule
        \midrule
        \multicolumn{7}{c}{\textbf{CANDI}} \\
        \midrule
        \textbf{Method} 
        & \textbf{Dice $\uparrow$} 
        & $|J|_{<0}\% \downarrow$ 
        & \textbf{HD95} $\downarrow$ 
        & \textbf{ASSD} $\downarrow$ 
        & \textbf{SSIM} $\uparrow$
        & \textbf{SDLogJ} $\downarrow$ \\
        \midrule
        LDDMM & 78.52 $\pm$ 1.66 & 0.0017 $\pm$ 0.0007 & 2.44 $\pm$ 0.31 & 0.92 $\pm$ 0.25 & 94.49 $\pm$ 1.76 & 0.023 $\pm$ 0.010 \\
        \midrule
        CycleMorph & 81.37 $\pm$ 1.25 & 0.0027 $\pm$ 0.0013 & 2.23 $\pm$ 0.29 & 0.96 $\pm$ 0.30 & 95.29 $\pm$ 1.66 & 0.075 $\pm$ 0.031 \\
        GradICON & \textcolor{blue}{83.39 $\pm$ 1.09} & 0.0015 $\pm$ 0.0008 & 2.14 $\pm$ 0.27 & \textbf{0.72 $\pm$ 0.16} & 96.35 $\pm$ 1.55 & \textcolor{blue}{0.018 $\pm$ 0.011} \\
        TransMorph-diff & 83.20 $\pm$ 1.14 & 0.0026 $\pm$ 0.0012 & \textcolor{blue}{2.06 $\pm$ 0.26} & 0.80 $\pm$ 0.48 & 96.99 $\pm$ 1.83 & 0.052 $\pm$ 0.038 \\
        R2Net & 78.79 $\pm$ 1.49 & 0.0032 $\pm$ 0.0016 & 2.78 $\pm$ 0.32 & 0.87 $\pm$ 0.29 & 91.12 $\pm$ 1.28 & 0.023 $\pm$ 0.019 \\
        NODEO & 80.60 $\pm$ 1.26 & 0.0012 $\pm$ 0.0007 & 2.46 $\pm$ 0.29 & 0.86 $\pm$ 0.27 & 94.29 $\pm$ 1.64 & 0.031 $\pm$ 0.016 \\
        NePhi & 79.15 $\pm$ 1.46 & \textcolor{blue}{0.0009 $\pm$ 0.0005} & 2.36 $\pm$ 0.30 & 0.83 $\pm$ 0.27 & 92.07 $\pm$ 1.74 & \textbf{0.016 $\pm$ 0.011} \\
        PULPo & 78.38 $\pm$ 1.57 & 0.0136 $\pm$ 0.0047 & 2.89 $\pm$ 0.33 & 0.93 $\pm$ 0.36 & 91.62 $\pm$ 1.33 & 0.048 $\pm$ 0.032 \\
        \midrule
        TransMorph & 83.48 $\pm$ 1.10 & 0.1059 $\pm$ 0.0388 & 2.13 $\pm$ 0.22 & 0.87 $\pm$ 0.38 & \textcolor{red}{97.44 $\pm$ 1.74} & 0.403 $\pm$ 0.213 \\
        TransMatch & 82.60 $\pm$ 1.18 & 0.0160 $\pm$ 0.0061 & 2.28 $\pm$ 0.25 & 0.87 $\pm$ 0.47 & 96.48 $\pm$ 1.59 & 0.224 $\pm$ 0.155 \\
        DiffuseMorph & 77.22 $\pm$ 1.51 & 0.1277 $\pm$ 0.0412 & 2.53 $\pm$ 0.30 & 1.12 $\pm$ 0.55 & 92.64 $\pm$ 1.83 & 0.488 $\pm$ 0.201 \\
        DiffuseReg & 79.31 $\pm$ 1.38 & 0.0322 $\pm$ 0.0125 & 2.31 $\pm$ 0.26 & 1.03 $\pm$ 0.44 & 93.88 $\pm$ 2.01 & 0.388 $\pm$ 0.251 \\
        HViT & \textcolor{red}{83.67 $\pm$ 1.07} & 0.1368 $\pm$ 0.0435 & \textcolor{red}{2.09 $\pm$ 0.21} & 0.87 $\pm$ 0.31 & \textbf{97.63 $\pm$ 1.58} & 0.251 $\pm$ 0.187 \\
        CorrMLP & 81.96 $\pm$ 1.25 & 0.0098 $\pm$ 0.0043 & 2.15 $\pm$ 0.26 & \textcolor{red}{0.82 $\pm$ 0.18} & 95.68 $\pm$ 1.41 & 0.278 $\pm$ 0.140 \\
        SACB-Net & 82.75 $\pm$ 1.22 & \textcolor{red}{0.0047 $\pm$ 0.0021} & 2.19 $\pm$ 0.27 & 0.88 $\pm$ 0.39 & 96.22 $\pm$ 1.78 & 0.298 $\pm$ 0.151 \\
        DGIR & 80.13 $\pm$ 1.39 & 0.0077 $\pm$ 0.0028 & 2.74 $\pm$ 0.41 & 0.94 $\pm$ 0.37 & 93.48 $\pm$ 1.70 & \textcolor{red}{0.177 $\pm$ 0.094} \\
        \midrule
        \textbf{TPFM-DIR} & \textbf{84.77 $\pm$ 1.37} & \textbf{0.0001 $\pm$ 0.0001} & \textbf{1.95 $\pm$ 0.49} & \textcolor{blue}{0.76 $\pm$ 0.12} & \textcolor{blue}{97.28 $\pm$ 1.29} & 0.019 $\pm$ 0.008 \\
        \midrule
        \midrule
        \multicolumn{7}{c}{\textbf{AbdomenCT}} \\
        \midrule
        \textbf{Method} 
        & \textbf{Dice $\uparrow$} 
        & $|J|_{<0}\% \downarrow$ 
        & \textbf{HD95} $\downarrow$ 
        & \textbf{ASSD} $\downarrow$ 
        & \textbf{SSIM} $\uparrow$
        & \textbf{SDLogJ} $\downarrow$ \\
        \midrule
        LDDMM & 37.36 $\pm$ 2.71 & 0.0112 $\pm$ 0.0072 & 12.61 $\pm$ 1.46 & 4.73 $\pm$ 2.45 & 53.83 $\pm$ 3.17 & 0.189 $\pm$ 0.124 \\
        \midrule
        CycleMorph & 42.56 $\pm$ 2.65 & 0.0053 $\pm$ 0.0025 & 13.04 $\pm$ 1.28 & 4.77 $\pm$ 1.39 & 52.68 $\pm$ 3.02 & 0.108 $\pm$ 0.058 \\
        GradICON & 44.02 $\pm$ 2.81 & 0.0009 $\pm$ 0.0004 & \textcolor{blue}{12.21 $\pm$ 1.33} & 4.02 $\pm$ 1.72 & 54.86 $\pm$ 3.27 & \textcolor{blue}{0.026 $\pm$ 0.011} \\
        TransMorph-diff & 41.41 $\pm$ 2.72 & 0.0034 $\pm$ 0.0017 & 12.39 $\pm$ 1.25 & 4.29 $\pm$ 1.47 & 62.61 $\pm$ 2.98 & 0.029 $\pm$ 0.016 \\
        R2Net & 41.99 $\pm$ 2.89 & 0.0010 $\pm$ 0.0005 & 13.05 $\pm$ 1.32 & 4.93 $\pm$ 2.08 & 51.52 $\pm$ 3.36 & 0.071 $\pm$ 0.058 \\
        NODEO & 40.37 $\pm$ 2.96 & \textcolor{blue}{0.0007 $\pm$ 0.0004} & 13.16 $\pm$ 1.38 & 4.33 $\pm$ 1.83 & 52.45 $\pm$ 3.42 & 0.039 $\pm$ 0.019 \\
        NePhi & \textcolor{blue}{45.32 $\pm$ 2.24} & 0.0008 $\pm$ 0.0003 & 12.48 $\pm$ 1.19 & 3.91 $\pm$ 1.37 & 66.34 $\pm$ 2.54 & 0.031 $\pm$ 0.018 \\
        PULPo & 44.53 $\pm$ 2.13 & 0.0819 $\pm$ 0.0173 & 11.89 $\pm$ 1.86 & \textcolor{blue}{3.51 $\pm$ 1.14} & 53.89 $\pm$ 2.81 & 0.038 $\pm$ 0.015 \\
        \midrule
        TransMorph & 46.66 $\pm$ 2.48 & 3.1310 $\pm$ 0.8113 & 12.61 $\pm$ 1.28 & 4.31 $\pm$ 1.82 & \textbf{71.84 $\pm$ 2.54} & 0.580 $\pm$ 0.279 \\
        TransMatch & 43.64 $\pm$ 2.35 & 2.6378 $\pm$ 0.6922 & 12.50 $\pm$ 1.22 & 3.87 $\pm$ 1.86 & \textcolor{red}{71.47 $\pm$ 2.33} & 0.917 $\pm$ 0.688 \\
        DiffuseMorph & 39.50 $\pm$ 2.81 & 0.0292 $\pm$ 0.0113 & 12.88 $\pm$ 1.34 & 4.74 $\pm$ 1.49 & 53.02 $\pm$ 3.09 & 0.320 $\pm$ 0.218 \\
        DiffuseReg & 45.91 $\pm$ 2.56 & 1.8106 $\pm$ 0.7348 & 12.55 $\pm$ 1.25 & 4.32 $\pm$ 1.67 & 56.87 $\pm$ 2.88 & \textcolor{red}{0.249 $\pm$ 0.170} \\
        HViT & 47.66 $\pm$ 2.41 & 0.9601 $\pm$ 0.4175 & \textcolor{red}{10.11 $\pm$ 1.10} & 3.88 $\pm$ 1.25 & 66.90 $\pm$ 2.74 & 0.643 $\pm$ 0.392 \\
        CorrMLP & 50.28 $\pm$ 2.18 & 0.1656 $\pm$ 0.0643 & 11.47 $\pm$ 1.16 & \textcolor{red}{3.01 $\pm$ 1.83} & 62.66 $\pm$ 2.59 & 0.412 $\pm$ 0.288 \\
        SACB-Net & \textcolor{red}{53.38 $\pm$ 2.63} & 0.9348 $\pm$ 0.1135 & 13.09 $\pm$ 1.41 & 3.66 $\pm$ 1.76 & 66.73 $\pm$ 2.96 & 0.426 $\pm$ 0.344 \\
        DGIR & 42.08 $\pm$ 2.23 & \textcolor{red}{0.0277 $\pm$ 0.0114} & 12.24 $\pm$ 1.80 & 4.65 $\pm$ 1.59 & 64.48 $\pm$ 2.71 & 0.299 $\pm$ 0.170 \\
        \midrule
        \textbf{TPFM-DIR} & \textbf{54.54 $\pm$ 1.86} & \textbf{0.0002 $\pm$ 0.0001} & \textbf{9.81 $\pm$ 1.13} & \textbf{2.84 $\pm$ 0.98} & \textcolor{blue}{68.94 $\pm$ 1.77} & \textbf{0.025 $\pm$ 0.015} \\
        \bottomrule
    \end{tabular}}
    \label{tab:candi_mindboggle_appendix}
\end{table*}

\begin{table}[!ht]
    \centering
    \caption{Quantitative comparison on the LungCT dataset. The best result per metric is shown in \textbf{bold}. The second-best diffeomorphic and deformable results are highlighted in \textcolor{blue}{blue} and \textcolor{red}{red}, respectively.}
    \resizebox{0.6\linewidth}{!}{
    \begin{tabular}{lcccc}
        \toprule
        \textbf{Method} 
        & \textbf{TRE} $\downarrow$
        & $|J|_{<0}\% \downarrow$ 
        & \textbf{SSIM} $\uparrow$
        & \textbf{SDLogJ} $\downarrow$ \\
        \midrule
        LDDMM & 3.09 $\pm$ 0.26 & 0.0033 $\pm$ 0.0015 & 61.14 $\pm$ 3.11 & 0.015 $\pm$ 0.010 \\
        \midrule
        CycleMorph & 3.04 $\pm$ 0.19 & 0.7180 $\pm$ 0.2117 & 56.99 $\pm$ 3.44 & 0.046 $\pm$ 0.013 \\
        GradICON & \textcolor{blue}{2.64 $\pm$ 0.17} & \textcolor{blue}{0.0009 $\pm$ 0.0004} & 54.06 $\pm$ 2.86 & \textcolor{blue}{0.004 $\pm$ 0.004} \\
        TransMorph-diff & 2.89 $\pm$ 0.18 & 0.0042 $\pm$ 0.0015 & 66.85 $\pm$ 2.95 & 0.006 $\pm$ 0.007 \\
        R2Net & 2.99 $\pm$ 0.19 & 0.0211 $\pm$ 0.0068 & 68.06 $\pm$ 2.81 & 0.013 $\pm$ 0.004 \\
        NODEO & 2.87 $\pm$ 0.18 & 0.0038 $\pm$ 0.0013 & \textcolor{blue}{70.80 $\pm$ 2.74} & 0.012 $\pm$ 0.008 \\
        NePhi & 3.12 $\pm$ 0.21 & 0.0093 $\pm$ 0.0027 & 56.66 $\pm$ 3.33 & 0.004 $\pm$ 0.001 \\
        \midrule
        TransMorph & 2.85 $\pm$ 0.17 & 0.4767 $\pm$ 0.1546 & 68.82 $\pm$ 2.65 & 0.195 $\pm$ 0.050 \\
        TransMatch & 3.02 $\pm$ 0.19 & 0.1343 $\pm$ 0.0482 & 67.47 $\pm$ 2.89 & 0.088 $\pm$ 0.032 \\
        DiffuseMorph & 3.38 $\pm$ 0.21 & 0.0794 $\pm$ 0.0195 & 62.98 $\pm$ 2.92 & 0.035 $\pm$ 0.029 \\
        DiffuseReg & 3.26 $\pm$ 0.20 & 0.7765 $\pm$ 0.2337 & 65.75 $\pm$ 2.71 & 0.037 $\pm$ 0.011 \\
        HViT & 3.22 $\pm$ 0.20 & 0.1791 $\pm$ 0.0554 & 66.72 $\pm$ 3.41 & 0.027 $\pm$ 0.019 \\
        CorrMLP & \textcolor{red}{2.48 $\pm$ 0.16} & 0.2673 $\pm$ 0.0711 & 65.57 $\pm$ 3.08 & 0.073 $\pm$ 0.049 \\
        SACB-Net & 3.01 $\pm$ 0.19 & 0.9824 $\pm$ 0.2761 & \textcolor{red}{70.06 $\pm$ 2.77} & 0.059 $\pm$ 0.015 \\
        DGIR & 3.24 $\pm$ 0.42 & \textcolor{red}{0.0711 $\pm$ 0.0428} & 67.72 $\pm$ 2.41 & \textcolor{red}{0.025 $\pm$ 0.021} \\
        \midrule
        \textbf{TPFM-DIR} & \textbf{2.19 $\pm$ 0.14} & \textbf{0.0 $\pm$ 0.0} & \textbf{71.18 $\pm$ 2.35} & \textbf{0.002 $\pm$ 0.001} \\
        \bottomrule
    \end{tabular}}
    \label{tab:lungct_only}
\end{table}

\begin{table*}[!t]
    \centering
    \caption{Quantitative comparison on the ACDC and CAMUS datasets. The best result per metric is shown in \textbf{bold}. The second-best result is highlighted in \textcolor{red}{red}.}
    \resizebox{0.85\linewidth}{!}{
    \begin{tabular}{lcccccc}
        \toprule
        \multicolumn{7}{c}{\textbf{ACDC}} \\
        \midrule
        \textbf{Method} 
        & \textbf{Dice $\uparrow$} 
        & $|J|_{<0}\% \downarrow$ 
        & \textbf{HD95} $\downarrow$ 
        & \textbf{ASSD} $\downarrow$ 
        & \textbf{SSIM} $\uparrow$
        & \textbf{SDLogJ} $\downarrow$ \\
        \midrule
        LDDMM & 78.17 $\pm$ 1.52 & 0.0062 $\pm$ 0.0044 & \textcolor{red}{5.43 $\pm$ 0.63} & \textcolor{red}{2.02 $\pm$ 0.96} & 86.60 $\pm$ 1.87 & 0.197 $\pm$ 0.131 \\
        \midrule
        AdaCS & \textcolor{red}{85.46 $\pm$ 1.12} & 0.1493 $\pm$ 0.0185 & 5.56 $\pm$ 0.34 & 2.12 $\pm$ 1.20 & 84.46 $\pm$ 1.47 & 0.326 $\pm$ 0.131 \\
        TransMorph-diff & 82.13 $\pm$ 1.42 & \textcolor{red}{0.0012 $\pm$ 0.0005} & 6.55 $\pm$ 0.43 & 2.67 $\pm$ 1.36 & 79.56 $\pm$ 1.68 & \textbf{0.162 $\pm$ 0.069} \\
        TransMorph & 84.47 $\pm$ 1.18 & 0.0418 $\pm$ 0.0087 & 6.24 $\pm$ 0.36 & 2.54 $\pm$ 1.22 & 80.08 $\pm$ 1.41 & 0.575 $\pm$ 0.094 \\
        DiffuseMorph & 81.90 $\pm$ 1.35 & 0.0383 $\pm$ 0.0091 & 7.34 $\pm$ 0.47 & 2.81 $\pm$ 1.48 & 79.81 $\pm$ 1.62 & 0.251 $\pm$ 0.092 \\
        DiffuseReg & 83.31 $\pm$ 1.21 & 0.2345 $\pm$ 0.0228 & 6.17 $\pm$ 0.39 & 2.35 $\pm$ 1.31 & 82.13 $\pm$ 1.54 & 0.398 $\pm$ 0.107 \\
        DGIR & 82.39 $\pm$ 2.97 & 0.0059 $\pm$ 0.0125 & 7.43 $\pm$ 1.08 & 2.26 $\pm$ 0.79 & \textbf{92.91 $\pm$ 2.91} & 0.241 $\pm$ 0.021 \\
        \midrule
        \textbf{TPFM-DIR} & \textbf{87.42 $\pm$ 0.99} & \textbf{0.0002 $\pm$ 0.0001} & \textbf{5.36 $\pm$ 0.35} & \textbf{1.91 $\pm$ 1.07} & \textcolor{red}{87.17 $\pm$ 1.21} & \textcolor{red}{0.182 $\pm$ 0.023} \\
        \midrule
        \midrule
        \multicolumn{7}{c}{\textbf{CAMUS}} \\
        \midrule
        \textbf{Method} 
        & \textbf{Dice $\uparrow$} 
        & $|J|_{<0}\% \downarrow$ 
        & \textbf{HD95} $\downarrow$ 
        & \textbf{ASSD} $\downarrow$ 
        & \textbf{SSIM} $\uparrow$
        & \textbf{SDLogJ} $\downarrow$ \\
        \midrule
        LDDMM & 79.30 $\pm$ 5.21 & 0.0019 $\pm$ 0.0013 & 6.25 $\pm$ 1.35 & 2.61 $\pm$ 0.92 & 76.37 $\pm$ 1.94 & 0.195 $\pm$ 0.031 \\
        \midrule
        AdaCS & \textcolor{red}{82.67 $\pm$ 5.14} & 0.0029 $\pm$ 0.0130 & \textcolor{red}{5.23 $\pm$ 1.95} & \textcolor{red}{2.11 $\pm$ 0.76} & 77.18 $\pm$ 1.85 & 0.193 $\pm$ 0.033 \\
        TransMorph-diff & 78.63 $\pm$ 5.54 & \textcolor{red}{0.0011 $\pm$ 0.0016} & 10.15 $\pm$ 3.16 & 4.04 $\pm$ 1.12 & 77.57 $\pm$ 3.12 & \textcolor{red}{0.166 $\pm$ 0.028} \\
        TransMorph & 80.53 $\pm$ 5.93 & 0.1017 $\pm$ 0.1376 & 9.10 $\pm$ 3.16 & 3.68 $\pm$ 1.21 & \textcolor{red}{78.04 $\pm$ 3.32} & 0.312 $\pm$ 0.075 \\
        DiffuseMorph & 76.43 $\pm$ 6.72 & 0.0932 $\pm$ 0.1329 & 9.77 $\pm$ 3.18 & 4.28 $\pm$ 1.27 & 77.68 $\pm$ 3.57 & 0.393 $\pm$ 0.087 \\
        DiffuseReg & 77.29 $\pm$ 6.19 & 0.4026 $\pm$ 0.1402 & 10.60 $\pm$ 3.04 & 4.22 $\pm$ 1.23 & 77.75 $\pm$ 3.22 & 0.395 $\pm$ 0.062 \\
        DGIR & 81.83 $\pm$ 6.27 & 0.0315 $\pm$ 0.0282 & 6.71 $\pm$ 2.02 & 2.78 $\pm$ 0.82 & 76.91 $\pm$ 2.95 & 0.301 $\pm$ 0.066 \\
        \midrule
        \textbf{TPFM-DIR} & \textbf{85.12 $\pm$ 2.12} & \textbf{0.0 $\pm$ 0.0} & \textbf{4.72 $\pm$ 2.19} & \textbf{2.06 $\pm$ 0.94} & \textbf{79.65 $\pm$ 1.16} & \textbf{0.162 $\pm$ 0.020} \\
        \bottomrule
    \end{tabular}}
    \label{tab:acdc_camus_appendix}
\end{table*}

\section{Visual Results}
\label{app:visuals}

In this section we present the visual results on the datasets absent in the main paper. These include LPBA40 dataset on which we illustrate the forward/backward warping performance of TPFM-DIR (\cref{fig:lpba_appendix}), and IXI and Mindboggle101 dataset on which we provide a comparison with other diffeomorphic and deformbale methods (\cref{fig:ixi_appendix,fig:mindboggle_appendix}). Furthermore, we provide Dice score distribution across different anatomical regions for OASIS, IXI, CANDI, and ACDC datasets in \cref{fig:boxplots}.

\begin{figure}[!ht]
    \centering
    \includegraphics[width=\linewidth]{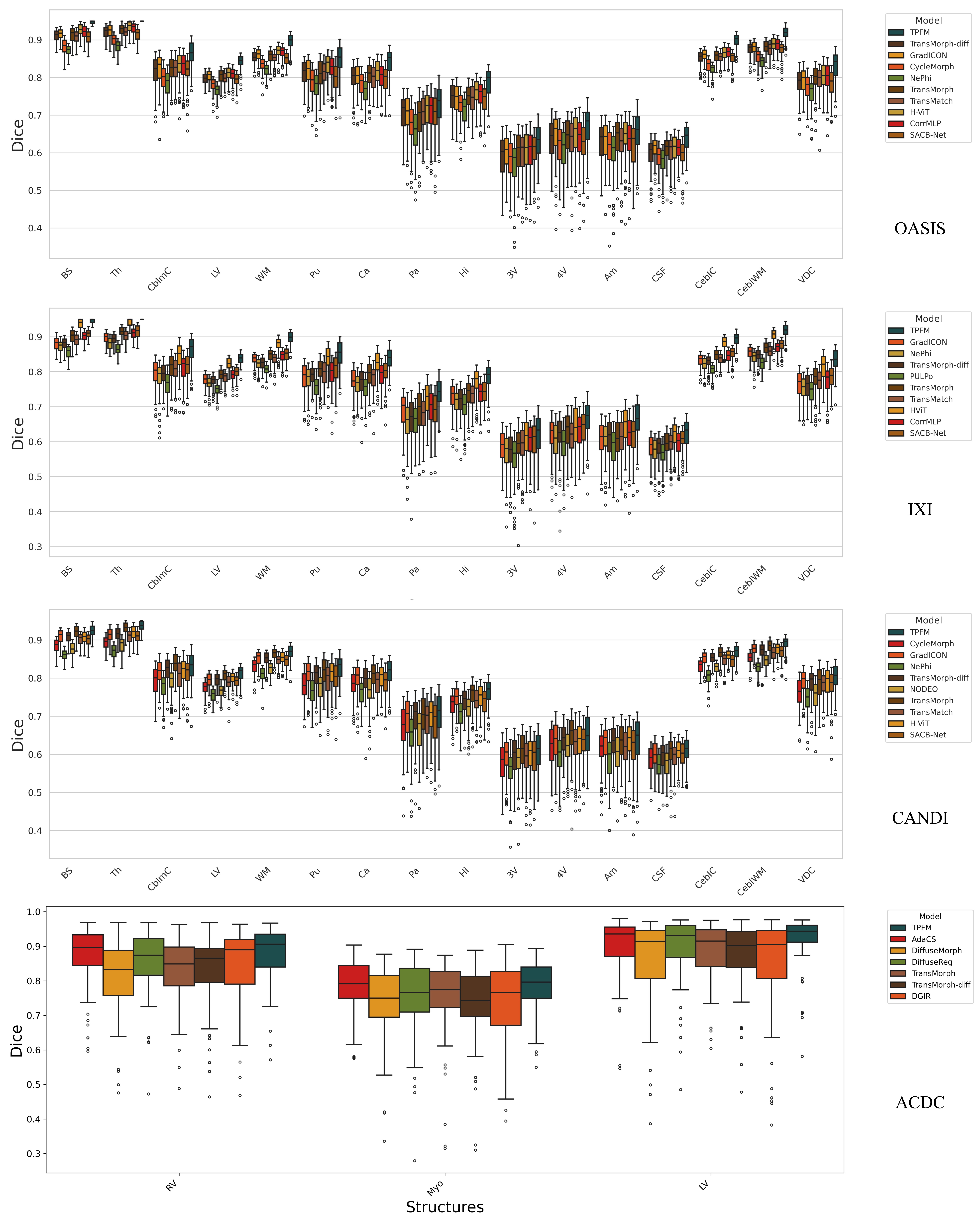}
    \caption{Boxplots of Dice performance across different anatomical regions over OASIS, IXI, CANDI, and ACDC datasets.}
    \label{fig:boxplots}
\end{figure}

\begin{figure}[!ht]
    \centering
    \includegraphics[width=\linewidth]{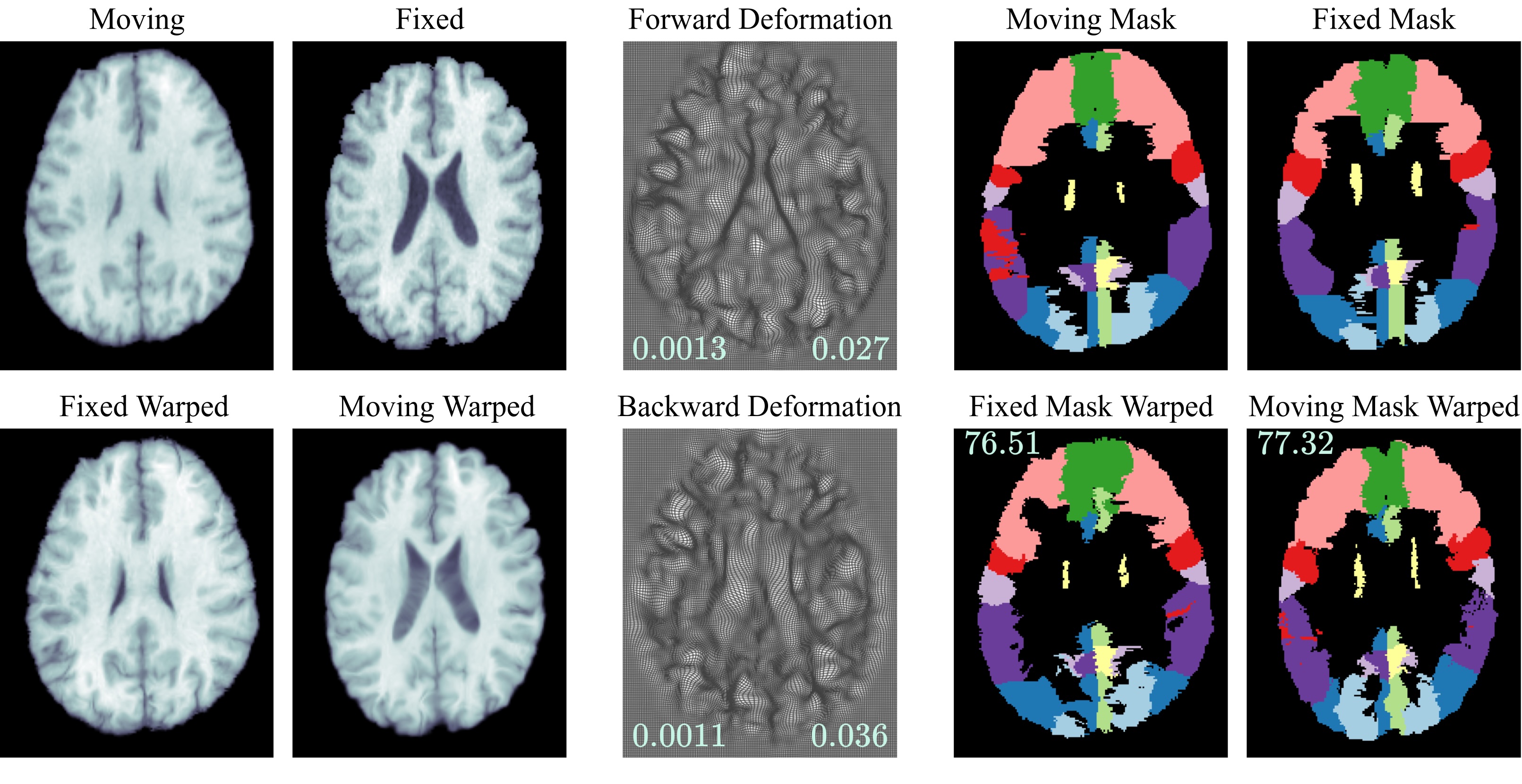}
    \caption{Visual result on forward and backward warping using TPFM-DIR on LPBA40. Forward and backward deformations warp the moving image towards the fixed image and vice versa. SDLogJ and $|J|_{<0}\%$ are reported at the bottom right and bottom left corners of the deformation grids. The forward and backward Dice scores are reported on the top left corners of the warped masks.}
    \label{fig:lpba_appendix}
\end{figure}

\begin{figure}[!ht]
    \centering
    \includegraphics[width=\linewidth]{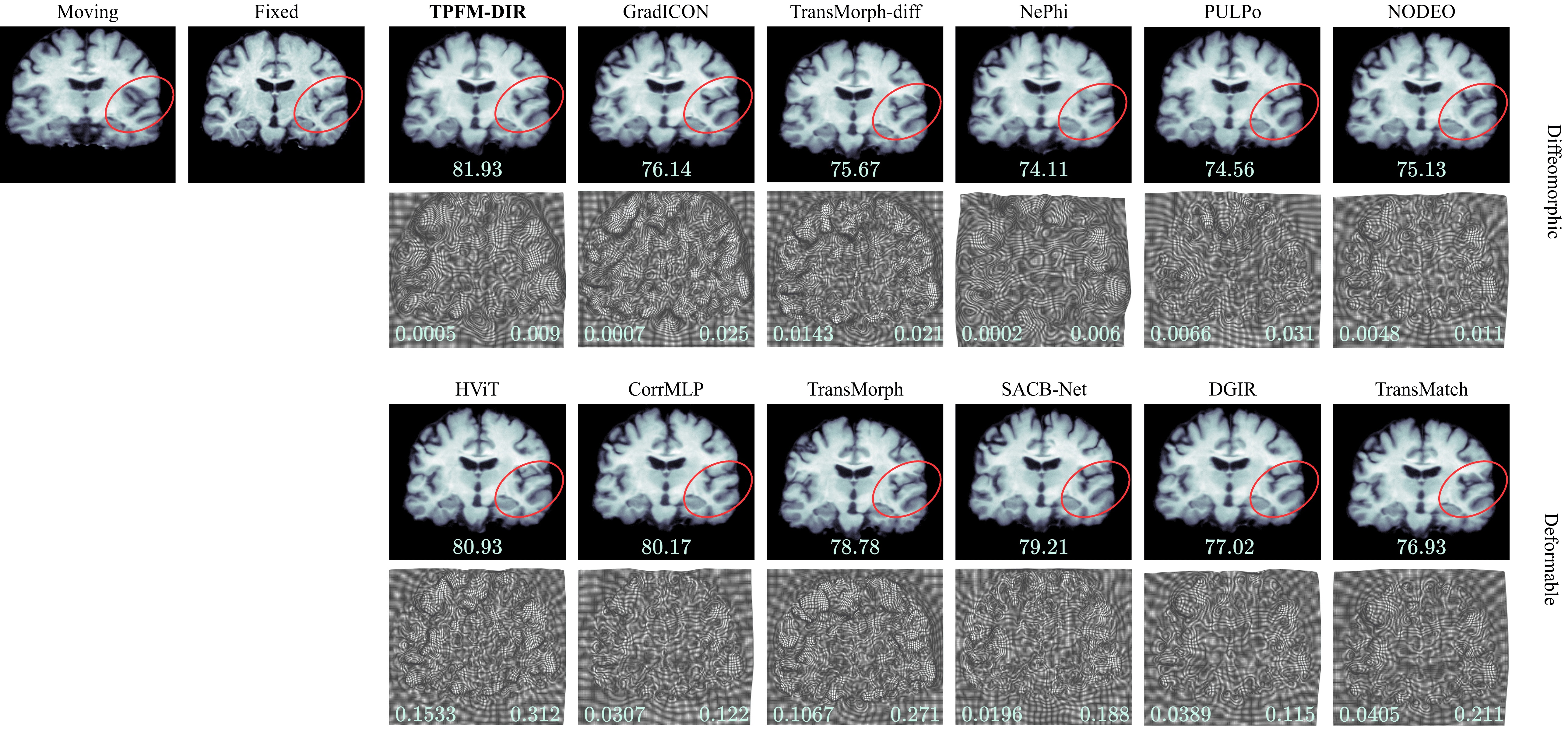}
    \caption{Visual results on IXI. Dice score is reported at the bottom of each warped image. SDLogJ and $|J|_{<0}\%$ are reported at the bottom right and bottom left corners of the deformation grids, respectively.}
    \label{fig:ixi_appendix}
\end{figure}

\begin{figure}[!ht]
    \centering
    \includegraphics[width=\linewidth]{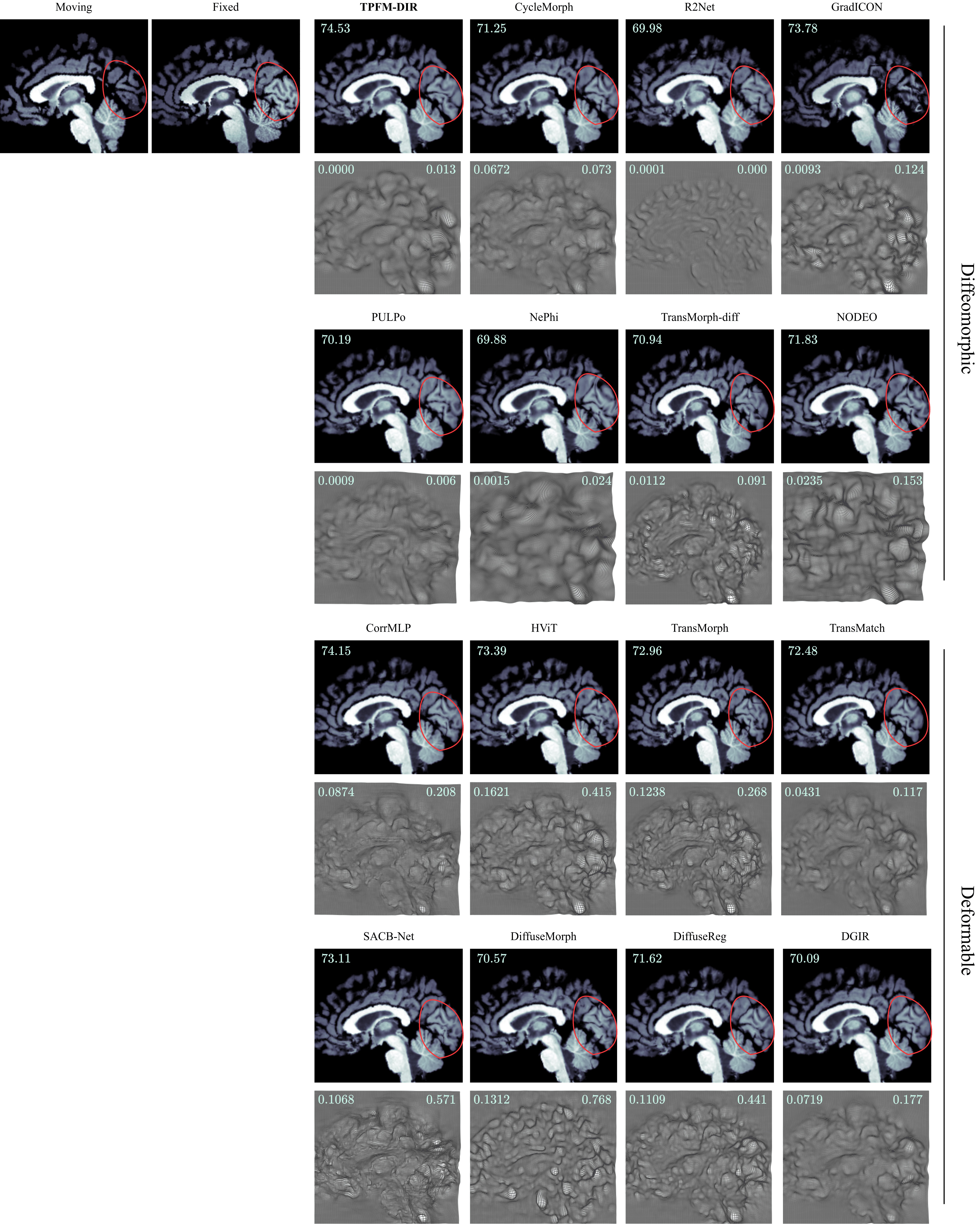}
    \caption{Visual results on Mindboggle101. Dice score is reported at the top left corner of each warped image. SDLogJ and $|J|_{<0}\%$ are reported at the top right and top left corners of the deformation grids, respectively.}
    \label{fig:mindboggle_appendix}
\end{figure}

\end{document}